\documentclass[letterpaper, left=48pt, right=48pt, bottom=47pt, top=60pt, conference]{ieeeconf}  % Comment this line out if you need a4paper

\IEEEoverridecommandlockouts                              % This command is only needed if
% you want to use the \thanks command

\overrideIEEEmargins                                      % Needed to meet printer requirements.

% The following packages can be found on http:\\www.ctan.org
%\usepackage{graphics} % for pdf, bitmapped graphics files
% \usepackage{epsfig} % for postscript graphics files
% \usepackage{mathptmx} % assumes new font selection scheme installed
\usepackage{times} % assumes new font selection scheme installed

\usepackage{amsmath} % assumes amsmath package installed
\usepackage{amssymb}  % assumes amsmath package installed
\usepackage{amsfonts}
\usepackage{amsthm}
\usepackage{bm}
\usepackage[final]{changes}
\usepackage{graphicx}
\usepackage{epstopdf}
\epstopdfsetup{update}
\usepackage{subfig}
\usepackage{booktabs}
\usepackage[flushleft]{threeparttable}
\usepackage{multirow}
\usepackage{makecell}
\usepackage[numbers,sort&compress]{natbib}
\usepackage[linesnumbered,ruled,vlined]{algorithm2e}
\usepackage{amssymb}% http://ctan.org/pkg/amssymb
\usepackage{pifont}% http://ctan.org/pkg/pifont
\usepackage{caption}
\usepackage{balance}
\usepackage[export, demo]{adjustbox}    % in real document delete option "demo"
                                        % adjustbox call "graphicx"
                                        % "adjustbox" call "graphicx" plus add many function
                                        % for manipulating boxes, among them here is used "valign"
\usepackage{cellspace,                  % for adding vertical space around cells' contents
            tabularx}

% \captionsetup[table]{font={small,sc},labelsep=colon}
% \usepackage[letterpaper, left=48pt, right=48pt, bottom=47pt, top=60pt]{geometry}
% \usepackage[letterpaper]{geometry}
\usepackage[colorlinks,
            linkcolor=blue,
            anchorcolor=blue,
            citecolor=blue]{hyperref}

\newtheorem{thm}{Theorem}
\newtheorem{lem}{Lemma}

\theoremstyle{definition}
\newtheorem{defn}{Definition}

% \title{\LARGE \bf
% Distributed Formation Producing and Keeping: An Optimization Pipeline
% }

\title{\LARGE \bf
% Real-time Collision Avoidance for General Ellipsoids Based on 
Collision-free Control Barrier Functions for General Ellipsoids via Separating Hyperplane
}

\author{
      % Zhuping~Wang,~\IEEEmembership{Member,~IEEE,}
      Zeming~Wu,
      % Zhuping~Wang$^{1}$,
      % and Hao~Zhang$^{1}$
      % Huaicheng~Yan$^{2}$
      and Lu~Liu,~\IEEEmembership{Senior Member,~IEEE,}
      \thanks{This work was supported in part by the National Natural Science Foundation of China under Grants 62373314 and 62222318, and in part by the Research Grants Council of the Hong Kong Special Administrative Region of China under Projects CityU/11210424.}% <-this % stops a space
      \thanks{The authors are with the Department of Mechanical Engineering, City University of Hong Kong, Hong Kong, SAR, China (e-mail: zemingwu5-c@my.cityu.edu.hk; luliu45@cityu.edu.hk).}
}

\begin{document}

\bstctlcite{IEEEexample:BSTcontrol}

\maketitle
\thispagestyle{empty}
\pagestyle{empty}

%%%%%%%%%%%%%%%%%%%%%%%%%%%%%%%%%%%%%%%%%%%%%%%%%%%%%%%%%%%%%%%%%%%%%%%%%%%%%%%%
\begin{abstract}
  This paper presents a novel collision avoidance method for general ellipsoids based on control barrier functions (CBFs) and separating hyperplanes.
  First, collision-free conditions for general ellipsoids are analytically derived using the concept of dual cones.
  These conditions are incorporated into the CBF framework by extending the system dynamics of controlled objects with separating hyperplanes, enabling efficient and reliable collision avoidance.
  The validity of the proposed collision-free CBFs is rigorously proven, ensuring their effectiveness in enforcing safety constraints.
  The proposed method requires only single-level optimization, significantly reducing computational time compared to state-of-the-art methods.
  Numerical simulations and real-world experiments demonstrate the effectiveness and practicality of the proposed algorithm.
\end{abstract}
% \begin{keywords}
% \end{keywords}

%%%%%%%%%%%%%%%%%%%%%%%%%%%%%%%%%%%%%%%%%%%%%%%%%%%%%%%%%%%%%%%%%%%%%%%%%%%%%%%%
\section{Introduction}
\label{sec:introduction}

The rapid advancement of artificial intelligence has driven the deployment of autonomous systems into increasingly complex environments, such as self-driving vehicles in urban road networks \cite{ferranti2022distributed, huang2023decentralized,firoozi2024distributed,dai2024sailing} and robotic manipulators in cluttered production lines \cite{wen2022path,singletary2022safety,raghunathan2024pyrobocop,wei2024diffocclusion}.
In these applications, the geometric shapes of controlled objects cannot be neglected, as simplified geometric models often yield overly conservative control policies that compromise task efficiency.
% Consequently, collision avoidance with explicit geometric considerations remains a critical research challenge.

Various approaches have been developed to achieve geometric-aware collision avoidance, including trajectory optimization (TO)-based methods \cite{wang2023linear,tracy2023differentiable}, model predictive control (MPC)-based methods \cite{zhang2020optimization,wu2024gpu}, and CBF-based methods \cite{singletary2022safety,thirugnanam2022duality,dai2023safe}.
TO-based and MPC-based methods achieve geometric-aware collision avoidance by incorporating collision-free constraints into optimization problems. However, when precise geometric modeling is required, these collision-free constraints are generally non-convex, resulting in the need to solve non-convex optimization problems. Such problems are challenging to solve efficiently and reliably onboard, making these methods less suitable for safety-critical systems that demand real-time responsiveness.
CBF-based methods, on the other hand, achieve geometric-aware collision avoidance by transforming the collision-free constraints into linear constraints with respect to the control inputs. These linear constraints can be seamlessly incorporated into a quadratic program (QP) that minimally modifies a nominal controller. Since QPs are convex problems that can be solved efficiently and reliably, CBF-based methods have demonstrated their advantages in computational efficiency, particularly in the context of real-time safety-critical control.

To date, there are only a few CBF-based methods for geometric-aware collision avoidance. For instance, signed distance is utilized to design CBFs in \cite{singletary2022safety} for collision avoidance between general primitives. However, the evaluation of signed distance involves non-differentiable algorithms like the Gilbert-Johnson-Keerthi algorithm \cite{gilbert2002fast}, which makes the computation of the time derivatives of signed distance challenging. To circumvent this issue, the time derivatives of signed distance are approximated, resulting in a conservative controller. To eliminate the conservatism introduced by approximation, a duality-based CBF has been proposed in \cite{thirugnanam2022duality} for collision avoidance between polyhedra. Nonetheless, additional optimization problems and virtual states are required for the evaluation of the CBF, making the method less computationally efficient.
To address the non-smooth nature of signed distance, growth distance \cite{ong1996growth} is utilized to design smooth CBFs in \cite{dai2023safe} for collision avoidance between convex primitives. The time derivative of growth distance is calculated by leveraging the Karush-Kuhn-Tucker (KKT) conditions. Despite these advancements, two challenges still hinder the application of CBF-based methods in real-world scenarios.
Firstly, from a theoretical perspective, the validity of collision-free CBFs is questionable. For instance, the proposed CBF in \cite{thirugnanam2022duality} is not continuously differentiable, as shown in their simulation results, and the gradient of the CBF may vanish on the boundary of the safe set. These two features may affect the forward invariance properties of CBFs \cite{ames2019control}.
Secondly, from an implementation perspective, the evaluation of Euclidean distance, signed distance, and growth distance, along with their time derivatives, involves solving additional optimization problems. This results in a double-level optimization process, which reduces computational efficiency.

To address these challenges, this paper proposes a novel collision avoidance control method based on CBFs and the separating hyperplane theorem. The geometric shape of the controlled object is modeled as a general ellipsoid to balance theoretical rigor and computational efficiency. Since the geometry of most controlled objects, such as vehicles \cite{dai2024sailing}, manipulator components \cite{dai2023safe}, and quadcopters \cite{han2021fast}, can be approximated by general ellipsoids, this modeling approach can meet the precision requirements in most scenarios.
The main contributions of this paper are summarized as follows:
\begin{enumerate}
  \item The collision-free conditions for two general ellipsoids are derived analytically based on the separating hyperplane theorem and the concept of the dual cone.
  \item Leveraging these analytical collision-free conditions, novel collision-free CBFs are proposed, with their validity rigorously proven.
  \item A collision avoidance control method is developed that avoids the need for solving additional optimization problems, offering computational efficiency compared to double-level optimization methods.
  \item Simulations and experiments are conducted to demonstrate the effectiveness and practicality of the proposed collision avoidance control method.
\end{enumerate}

\section{Preliminaries}
\subsection{Notation}
Throughout this paper, $\mathbb{R}$ denotes the set of real numbers, $\mathbb{R}^n$ denotes the set of $n$-dimensional column vectors over $\mathbb{R}$, and $\mathbb{R}^{n \times m}$ denotes the set of $m$-by-$n$ matrices over $\mathbb{R}$.
Non-bold symbols are used for scalars $a \in \mathbb{R}$, bold lowercase symbols for vectors $\bm{a} \in \mathbb{R}^{n}$ and bold uppercase symbols for matrices $\bm{A} \in \mathbb{R}^{n \times m}$.
Specifically, $\bm{I}_d$ denotes the $d$-dimensional identity matrix, and
${SO}(d)$ denotes the $d$-dimensional special orthogonal group defined as ${SO}(d) = \{ \bm{R} \in \mathbb{R}^{d \times d} ~|~ \bm{R}^T \bm{R} = \bm{I}_d,~ \mathrm{det}(\bm{R}) = 1\}$.

Given a proper cone $\!\mathcal{K} \! \subseteq \! \mathbb{R}^{d}\!$, the general inequality $\!\preceq_{\mathcal{K}}\!$ is defined as,
\[\bm{x} \preceq_{\mathcal{K}} \bm{y} \quad \Longleftrightarrow \quad \bm{y}-\bm{x} \in \mathcal{K}.\]
Specifically, $\preceq$ denotes the inequalities introduced by nonnegative orthant cone $\mathbb{R}_{+}^{d}$, and $\bm{x} \preceq \bm{y}$ means $\bm{x}$ is component-wise less than or equal to $\bm{y}$.
Given a cone $\mathcal{K} \subset \mathbb{R}^{d}$, then the set
\begin{equation}
  \label{eqn:dual cone}
  \mathcal{K}^* = \big\{ \bm{y} \in \mathbb{R}^{d} ~\big|~ \bm{y}^T\bm{x} \geq 0, ~ \forall \bm{x} \in \mathcal{K} \big\},
\end{equation}
is called the dual cone of $\mathcal{K}$.

A function $\alpha: \mathbb{R} \rightarrow \mathbb{R}$ is said to be an \textit{extended class $\mathcal{K}_{\infty}$ function} if it is strictly increasing and $\alpha(0) = 0$.

\subsection{Control Barrier Function}
Consider the following control affine system,
\begin{equation}
  \label{eqn:control affine system}
  \dot{\bm{s}} = \bm{f}(\bm{s}) + \bm{g}(\bm{s}) \bm{u},
\end{equation}
where $\bm{s}\in \mathbb{R}^n$ and $\bm{u} \in \mathbb{R}^m$ are the state and control, respectively.
The functions $\bm{f}: \mathbb{R}^n \rightarrow \mathbb{R}^n$ and $\bm{g}: \mathbb{R}^n \rightarrow \mathbb{R}^{n\times m}$ are locally Lipschitz.
Let $h:\mathbb{R}^n \rightarrow \mathbb{R}$ be a continuously differentiable function, and define the \textit{safe set} $\mathcal{S}$ as the zero super-level set of $h$, that is,
\begin{equation}
  \label{eqn:safe set}
  \mathcal{S} = \{ \bm{s} ~|~ h(\bm{s}) \geq 0 \}.
\end{equation}
Then, a control barrier function can be defined as follows.
\begin{defn}[Control Barrier Functions \cite{ames2016control}]
  Let $\mathcal{S} \subset \mathcal{\overline{S}} \subset \mathbb{R}^n$ be the zero super-level set of a continuously differentiable function $h: \mathcal{\overline{S}} \rightarrow \mathbb{R}$. Then, $h$ is said to be a \textit{control barrier function} on $\mathcal{\overline{S}}$ if there exists an extended class $\mathcal{K}_{\infty}$ function $\alpha$ such that the time derivative of $h$ along the trajectory of system \eqref{eqn:control affine system} satisfies
  \begin{equation}
    \sup_{\bm{u} \in \mathbb{R}^m} \underbrace{{L}_{\bm{f}}h(\bm{s}) + {L}_{\bm{g}}h(\bm{s}) \bm{u}}_{\dot{h}(\bm{s},\bm{u})} \geq -\alpha(h(\bm{s})), \quad \forall \bm{s} \in \mathcal{\overline{S}},
  \end{equation}
  where ${L}_{\bm{f}}h(\cdot) = \frac{\partial h}{\partial \bm{s}}(\cdot)^T \bm{f}(\cdot)$ and ${L}_{\bm{g}}h(\cdot) = \frac{\partial h}{\partial \bm{s}}(\cdot)^T \bm{g}(\cdot)$ denote the Lie derivatives of $h$ with respect to $\bm{f}$ and $\bm{g}$, respectively.
\end{defn}
The existence of CBFs ensures the existence of controllers that guarantee the forward invariance of the safe set $\mathcal{S}$.
This property is formally stated in the following lemma.
\begin{lem}[\cite{ames2016control}]
  \label{lem:CBF}
  Let $\mathcal{S} \subset \mathcal{\overline{S}} \subset \mathbb{R}^n$ be a set defined as the zero super-level set of a function $h$ such that:
  \begin{enumerate}
    \item $h: \mathcal{\overline{S}} \subset \mathbb{R}^n \rightarrow \mathbb{R}$ is continuously differentiable on $\mathcal{\overline{S}}$,
    \item $h$ is a control barrier function on $\mathcal{\overline{S}}$,
    \item $\frac{\partial h}{\partial \bm{s}}(\bm{s}) \neq \bm{0}$ for all $\bm{s} \in \partial \mathcal{S}$.
  \end{enumerate}
  Define the following set induced by $h(\bm{s})$:
  \begin{equation}
    \label{eqn:control barrier function set}
    \mathcal{U}_{cbf}(\bm{s}) = \Big\{ \bm{u} ~|~ {L}_{\bm{f}}h(\bm{s}) + {L}_{\bm{g}}h(\bm{s}) \bm{u} \geq -\alpha(h(\bm{s})) \Big\}.
  \end{equation}
  Then, any Lipschitz continuous controller $\bm{u}(\bm{s}) \in \mathcal{U}_{cbf}(\bm{s})$ for system \eqref{eqn:control affine system} renders the set $\mathcal{S}$ forward invariant. Additionally, the set $\mathcal{S}$ is asymptotically stable in $\mathcal{\overline{S}}$.
\end{lem}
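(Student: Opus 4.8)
The plan is to derive both conclusions from the comparison lemma applied to the scalar differential inequality that membership in $\mathcal{U}_{cbf}$ forces along closed-loop trajectories. First I would fix any locally Lipschitz controller $\bm{u}(\bm{s}) \in \mathcal{U}_{cbf}(\bm{s})$. Since $\bm{f}$ and $\bm{g}$ are locally Lipschitz, the closed-loop vector field $\bm{s} \mapsto \bm{f}(\bm{s}) + \bm{g}(\bm{s})\bm{u}(\bm{s})$ is locally Lipschitz, so by Picard--Lindel\"of each $\bm{s}_0 \in \overline{\mathcal{S}}$ admits a unique maximal solution $\bm{s}(t)$ with $\bm{s}(0)=\bm{s}_0$. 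Because $\bm{u}(\bm{s}(t)) \in \mathcal{U}_{cbf}(\bm{s}(t))$ for every $t$ and $h$ is $C^1$ on $\overline{\mathcal{S}}$, the map $t \mapsto h(\bm{s}(t))$ is differentiable and satisfies $\frac{d}{dt} h(\bm{s}(t)) = {L}_{\bm{f}}h(\bm{s}(t)) + {L}_{\bm{g}}h(\bm{s}(t))\bm{u}(\bm{s}(t)) \geq -\alpha\big(h(\bm{s}(t))\big)$ for all $t$ in the maximal interval on which $\bm{s}(t)$ remains in $\overline{\mathcal{S}}$.

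For forward invariance I would introduce the scalar initial value problem $\dot{y} = -\alpha(y)$, $y(0) = h(\bm{s}_0) \geq 0$ (assuming, as is standard, $\alpha$ locally Lipschitz so this has a unique solution; otherwise a one-sided comparison near the equilibrium still works). Since $\alpha$ is strictly increasing with $\alpha(0)=0$, $y\equiv 0$ is an equilibrium and $\dot{y}\le 0$ whenever $y\ge 0$, so a solution starting at $y(0)\ge 0$ cannot become negative; hence $y(t)\ge 0$ for all $t\ge 0$. The comparison lemma then yields $h(\bm{s}(t)) \geq y(t) \geq 0$ on the maximal interval, so $\bm{s}(t)\in\mathcal{S}$ whenever $\bm{s}_0\in\mathcal{S}$, i.e. $\mathcal{S}$ is forward invariant; in particular the trajectory never leaves $\overline{\mathcal{S}}$, so the differential inequality stays in force.

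For asymptotic stability in $\overline{\mathcal{S}}$ I would rerun the comparison with $\bm{s}_0 \in \overline{\mathcal{S}}\setminus\mathcal{S}$, i.e. $y(0)=h(\bm{s}_0)<0$: now $\dot{y}=-\alpha(y)>0$, so $y$ is strictly increasing and, being unable to overshoot the equilibrium, $y(t)\to 0$ monotonically as $t\to\infty$. Comparison gives $h(\bm{s}(t))\geq y(t)$, so $\liminf_{t\to\infty} h(\bm{s}(t))\geq 0$; combined with forward invariance of $\mathcal{S}$ (once $h$ attains a nonnegative value it stays nonnegative), this forces $h(\bm{s}(t))\to 0$, with $h(\bm{s}(t))$ lying in $[h(\bm{s}_0),0]$ and increasing until the trajectory enters $\mathcal{S}$. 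Invoking continuity of $h$ together with condition~3, namely $\frac{\partial h}{\partial \bm{s}}(\bm{s})\neq\bm{0}$ on $\partial\mathcal{S}$ (so that $0$ is a regular value and $\partial\mathcal{S}=\{h=0\}$ is a genuine codimension-one boundary rather than a degenerate set), gives both Lyapunov stability of $\mathcal{S}$ and convergence $\bm{s}(t)\to\mathcal{S}$, i.e. asymptotic stability of $\mathcal{S}$ relative to $\overline{\mathcal{S}}$.

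The hard part, and the step needing the most care, is the comparison argument together with converting ``the value $h(\bm{s}(t))$ tends to $0$'' into ``the state approaches the set $\mathcal{S}$'': one must control the regularity of $\alpha$ (or replace the comparison lemma with a direct one-sided estimate near $y=0$, where $\alpha$ need only be continuous and increasing), confirm that solutions stay in $\overline{\mathcal{S}}$ so the inequality remains valid on the whole maximal interval, and use condition~3 (plus, in full generality, a boundedness/properness observation on the trajectory) to rule out pathological level sets so that closeness in $h$-value is equivalent to closeness to $\mathcal{S}$. An alternative route for the invariance half is Nagumo's theorem: on $\partial\mathcal{S}$ one has $h=0$, so the inequality reads $\frac{\partial h}{\partial\bm{s}}(\bm{s})^{T}\dot{\bm{s}}\geq-\alpha(0)=0$, and condition~3 identifies $\{\bm{v}: \frac{\partial h}{\partial\bm{s}}(\bm{s})^{T}\bm{v}\geq 0\}$ with the tangent cone of $\mathcal{S}$, giving subtangentiality directly; but the comparison-lemma route is preferable here because it delivers the asymptotic-stability claim in the same stroke.
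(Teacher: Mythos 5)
The paper gives no proof of this lemma---it is imported verbatim from the cited reference \cite{ames2016control}---so the only meaningful comparison is with the proof given there, and your argument is essentially that proof: the comparison lemma applied to $\dot h \geq -\alpha(h)$ yields $h(\bm{s}(t))\geq y(t)\geq 0$ for $h(\bm{s}_0)\geq 0$ (forward invariance) and monotone convergence of the comparison solution to zero for $h(\bm{s}_0)<0$ (asymptotic stability of $\mathcal{S}$), with condition~3 ensuring that closeness in $h$-value corresponds to closeness to the set. The caveats you flag---local Lipschitz continuity of $\alpha$ so the comparison system is well posed, and keeping the trajectory inside $\overline{\mathcal{S}}$ so the differential inequality remains in force---are precisely the technical hypotheses under which the cited result is established, so the proposal is correct and takes the same route.
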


Given a desired controller $\bm{u}^d(\bm{s})$, the forward invariance of the safe set $\mathcal{S}$ can be achieved by minimally modifying $\bm{u}^d(\bm{s})$ such that the condition \eqref{eqn:control barrier function set} is satisfied.
Noting that the condition \eqref{eqn:control barrier function set} is a linear constraint on $\bm{u}$, the safe controller $\bm{u}^*$ can be computed via the following QP:
\begin{subequations}
  \begin{align}
    \bm{u}^* = \arg \min_{\bm{u}} & \qquad \frac{1}{2} \Vert \bm{u} - \bm{u}^d(\bm{s}) \Vert_2^2                        \\
    \text{s.t.} ~                 & \quad {L}_{\bm{f}}h(\bm{s}) + {L}_{\bm{g}}h(\bm{s}) \bm{u} \geq -\alpha(h(\bm{s})).
  \end{align}
\end{subequations}

\section{Collision-free Conditions for General Ellipsoids}
\label{sec:collision-free conditions}
In this section, novel collision-free conditions between two general ellipsoids are proposed based on separating hyperplanes.
The geometry of the $i$-th controlled object in the inertial frame is denoted as $\mathcal{G}_i$, which is defined as
\begin{equation}
  \label{eqn:geometries}
  \mathcal{G}_i = \{\bm{y} ~|~ \bm{y} = \bm{R}_i \bm{x} + \bm{\rho}_i, ~\bm{x} \in \mathcal{B}_i\},
\end{equation}
where $\mathcal{B}_i \subset \mathbb{R}^d$ is a compact convex set representing the geometry of the $i$-th controlled object in its body frame.
Moreover, $\bm{R}_i \in SO(d)$ and $\bm{\rho}_i \in \mathbb{R}^d$ are the rotation matrix and translation vector, respectively, that transform the body frame of object $i$ to the inertial frame.
Note that $\bm{R}_i$ and $\bm{\rho}_i$ should be regarded as the state of object $i$.
In this paper, $\mathcal{B}_i$ is assumed to be a general ellipsoid, which is defined as
\begin{equation}
  \label{eqn:general ellipsoid}
  \mathcal{B}_i = \left\{ \bm{x} \in \mathbb{R}^d ~\big|~ \Vert \bm{Q}_i^{-1} \bm{x} \Vert_{p_i}\leq 1 \right\},
\end{equation}
where $p_i > 1$ is the order of ellipsoid, and $\bm{Q}_i \in \mathbb{R}^{d \times d}$ is an invertible matrix.

To ensure the safety of objects $i$ and $j$, their geometries must not overlap, i.e., $\mathcal{G}_i \cap \mathcal{G}_j = \emptyset$.
This geometric constraint can be expressed mathematically using the following theorem:

\begin{thm}[Separating Hyperplane Theorem \cite{boyd2004convex}]
  \label{thm:separating hyperplane}
  Let $\mathcal{G}_i$ and $\mathcal{G}_j$ be two nonempty disjoint convex sets in $\mathbb{R}^d$, i.e., $\mathcal{G}_i \cap \mathcal{G}_j = \emptyset$.
  Then, there exist a normal vector $\bm{n}_{ij} \in \mathbb{R}^d$, $\bm{n}_{ij} \neq \bm{0}$, and an offset $\gamma_{ij} \in \mathbb{R}$ such that
  \begin{subequations}
    \label{eqn:separating hyperplane}
    \begin{gather}
      \bm{n}_{ij}^T \bm{y} \geq \gamma_{ij}, \quad \forall \bm{y} \in \mathcal{G}_i, \\
      \bm{n}_{ij}^T \bm{y} \leq \gamma_{ij}, \quad \forall \bm{y} \in \mathcal{G}_j.
    \end{gather}
  \end{subequations}
  The hyperplane $\{\bm{y} ~|~ \bm{n}_{ij}^T\bm{y} = \gamma_{ij}\}$ is called a separating hyperplane for the sets $\mathcal{G}_i$ and $\mathcal{G}_j$.
\end{thm}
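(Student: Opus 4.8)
The plan is to reduce the separation of the two sets to the separation of a single point from a convex set, and then to establish the latter by a nearest‑point argument combined with a compactness limit. Introduce the Minkowski difference $\mathcal{D} := \{\bm{y}_i - \bm{y}_j \mid \bm{y}_i \in \mathcal{G}_i,\ \bm{y}_j \in \mathcal{G}_j\}$. Because $\mathcal{G}_i$ and $\mathcal{G}_j$ are convex, $\mathcal{D}$ is a nonempty convex set, and the hypothesis $\mathcal{G}_i \cap \mathcal{G}_j = \emptyset$ is exactly the statement $\bm{0} \notin \mathcal{D}$. Once a vector $\bm{n}_{ij} \neq \bm{0}$ with $\bm{n}_{ij}^T \bm{z} \ge 0$ for all $\bm{z} \in \mathcal{D}$ is produced, it follows that $\bm{n}_{ij}^T \bm{y}_i \ge \bm{n}_{ij}^T \bm{y}_j$ for every $\bm{y}_i \in \mathcal{G}_i$ and $\bm{y}_j \in \mathcal{G}_j$; taking $\gamma_{ij} := \sup_{\bm{y}_j \in \mathcal{G}_j} \bm{n}_{ij}^T \bm{y}_j$, which is finite since it is bounded above by $\bm{n}_{ij}^T \bm{y}_i$ for any fixed $\bm{y}_i$, yields precisely the inequalities in \eqref{eqn:separating hyperplane}. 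Thus everything reduces to separating $\bm{0}$ from $\mathcal{D}$.

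For that I would split into two cases. \emph{Case 1: $\bm{0} \notin \mathrm{cl}(\mathcal{D})$.} Then $\mathrm{cl}(\mathcal{D})$ is a nonempty closed convex set omitting the origin, so by the Hilbert projection theorem it contains a unique point $\bm{z}^\star \neq \bm{0}$ nearest to $\bm{0}$, and the obtuse‑angle characterization of the projection, $(\bm{0}-\bm{z}^\star)^T(\bm{z}-\bm{z}^\star)\le 0$ for all $\bm{z}\in\mathrm{cl}(\mathcal{D})$, rearranges to $(\bm{z}^\star)^T \bm{z} \ge \|\bm{z}^\star\|_2^2 > 0$; hence $\bm{n}_{ij} := \bm{z}^\star$ works, in fact with strict inequality. \emph{Case 2: $\bm{0} \in \mathrm{cl}(\mathcal{D})$.} Since the relative interior of a convex set is contained in the set and $\mathrm{int}(\mathrm{cl}(\mathcal{D})) \subseteq \mathrm{relint}(\mathcal{D}) \subseteq \mathcal{D}$, the origin lies in neither set, so $\bm{0}$ is a boundary point of $\mathrm{cl}(\mathcal{D})$ and there is a sequence $\bm{z}_k \to \bm{0}$ with $\bm{z}_k \notin \mathrm{cl}(\mathcal{D})$. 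Applying the Case 1 construction to each $\bm{z}_k$ (projecting it onto $\mathrm{cl}(\mathcal{D})$, with projection $\bm{p}_k$) gives unit vectors $\bm{n}_k := (\bm{p}_k - \bm{z}_k)/\|\bm{p}_k - \bm{z}_k\|_2$ satisfying $\bm{n}_k^T \bm{z} \ge \bm{n}_k^T \bm{z}_k$ for all $\bm{z} \in \mathcal{D}$. By compactness of the unit sphere a subsequence of $\{\bm{n}_k\}$ converges to some $\bm{n}$ with $\|\bm{n}\|_2 = 1$; letting $k \to \infty$ and using $\bm{z}_k \to \bm{0}$ gives $\bm{n}^T \bm{z} \ge 0$ for all $\bm{z} \in \mathcal{D}$, so $\bm{n}_{ij} := \bm{n} \neq \bm{0}$ is the required normal.

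I expect the main obstacle to be Case 2: when $\mathcal{G}_i$ and $\mathcal{G}_j$ are not closed, or their closures abut, the distance $\inf\{\|\bm{y}_i-\bm{y}_j\|_2\}$ can be zero and unattained, so there is no closest pair of points and strict separation may genuinely fail — which is exactly why the statement asserts only the non‑strict inequalities. The compactness argument above is what recovers a (weakly) separating hyperplane, and normalizing the $\bm{n}_k$ is essential so that the limiting direction does not collapse to $\bm{0}$. I would also remark that in the setting of this paper the sets $\mathcal{G}_i$ are compact, being images of bounded ellipsoids under rigid motions, so the defining infimum is attained and Case 1 alone already gives the result; the general argument is included only for completeness.
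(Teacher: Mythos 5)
Your proof is correct. Note, however, that the paper offers no proof of this statement at all: it is quoted verbatim as a classical result with a citation to Boyd and Vandenberghe, so there is nothing internal to compare against. Your argument is the standard complete one --- pass to the Minkowski difference $\mathcal{D}=\mathcal{G}_i-\mathcal{G}_j$, separate $\bm{0}$ from $\mathcal{D}$ by nearest-point projection when $\bm{0}\notin\mathrm{cl}(\mathcal{D})$, and otherwise recover a weakly separating unit normal as a limit of normals obtained by projecting exterior points $\bm{z}_k\to\bm{0}$. This is in fact slightly more thorough than the textbook's main-text treatment, which establishes only the positive-distance case and leaves the boundary case (your Case~2, resting on $\mathrm{int}(\mathrm{cl}(\mathcal{D}))\subseteq\mathrm{relint}(\mathcal{D})\subseteq\mathcal{D}$ and compactness of the unit sphere) to an exercise. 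Your closing remark is also apt: for the ellipsoids actually used in this paper the sets are compact, the infimum defining the distance is attained, and Case~1 alone suffices.
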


\begin{figure}[htb]
  \vspace{-1.0em}
  \centering
  \includegraphics[width=0.8\hsize]{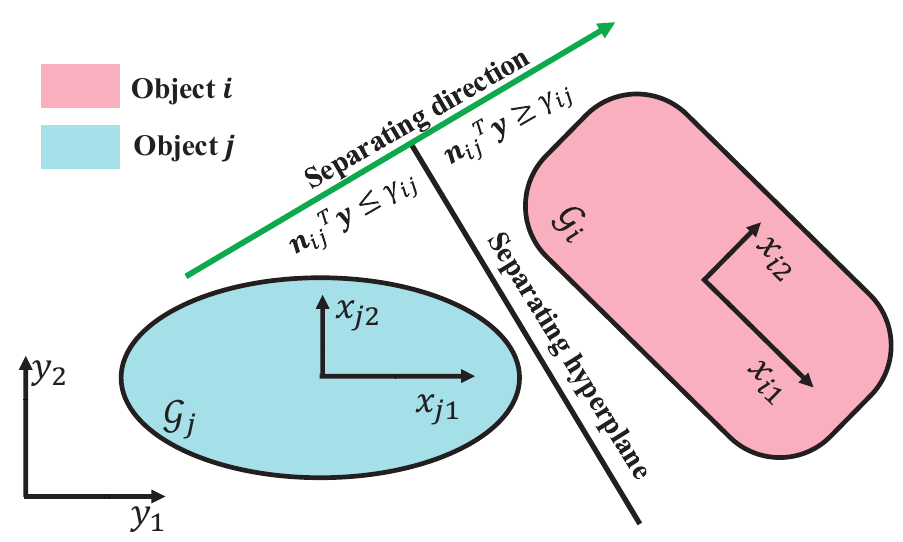}
  \caption{An illustration of the separating hyperplane theorem.}
  \label{fig:separating hyperplane}
  \vspace{-1.0em}
\end{figure}

The geometric illustration of Theorem \ref{thm:separating hyperplane} can be found in Fig. \ref{fig:separating hyperplane}.
Theorem \ref{thm:separating hyperplane} provides a key insight for collision avoidance between two general ellipsoids, namely, ensuring the existence of separating hyperplanes at all times.
However, due to the universal quantifier in constraint \eqref{eqn:separating hyperplane}, expressing the existence conditions of separating hyperplane $(\bm{n}_{ij},\gamma_{ij})$ as the zero super-level sets of continuously differentiable functions remains an open problem.
To address this gap, a new form of constraint \eqref{eqn:separating hyperplane} that without the universal quantifier is derived analytically using the dual cone of $p$-norm cone.
\begin{lem}[\cite{boyd2004convex}]
  \label{lem:norm cone}
  Given an order $p > 1$, the $p$-norm cone is defined as $\mathcal{K}_p = \{(\beta, \bm{z}) ~|~ \Vert \bm{z} \Vert_p \leq \beta \}$. Then its dual cone is $\mathcal{K}_p^* = \{(\lambda, \bm{\mu}) ~|~ \Vert \bm{\mu} \Vert_q \leq \lambda \}$ with $\frac{1}{q}+\frac{1}{p} = 1$.
\end{lem}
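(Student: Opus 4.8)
The plan is to prove the identity by verifying the two inclusions $\{(\lambda,\bm{\mu}) : \Vert\bm{\mu}\Vert_q \leq \lambda\} \subseteq \mathcal{K}_p^*$ and $\mathcal{K}_p^* \subseteq \{(\lambda,\bm{\mu}) : \Vert\bm{\mu}\Vert_q \leq \lambda\}$ directly from the definition \eqref{eqn:dual cone} of the dual cone. Writing a generic element of $\mathcal{K}_p$ as $(\beta,\bm{z})$ with $\Vert\bm{z}\Vert_p \leq \beta$, membership $(\lambda,\bm{\mu}) \in \mathcal{K}_p^*$ is equivalent to the scalar inequality $\lambda\beta + \bm{\mu}^T\bm{z} \geq 0$ holding for every such $(\beta,\bm{z})$. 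The whole argument then reduces to a careful use of Hölder's inequality in both directions.

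For the inclusion "$\supseteq$", I would suppose $\Vert\bm{\mu}\Vert_q \leq \lambda$. Then for any $(\beta,\bm{z}) \in \mathcal{K}_p$, Hölder's inequality gives $\bm{\mu}^T\bm{z} \geq -|\bm{\mu}^T\bm{z}| \geq -\Vert\bm{\mu}\Vert_q \Vert\bm{z}\Vert_p \geq -\lambda\beta$, hence $\lambda\beta + \bm{\mu}^T\bm{z} \geq 0$, so $(\lambda,\bm{\mu}) \in \mathcal{K}_p^*$. This direction is immediate.

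For the reverse inclusion "$\subseteq$", let $(\lambda,\bm{\mu}) \in \mathcal{K}_p^*$. Taking $\bm{z} = \bm{0}$ and $\beta = 1$ shows $\lambda \geq 0$, which already settles the degenerate case $\bm{\mu} = \bm{0}$. When $\bm{\mu} \neq \bm{0}$, I would exhibit the vector that makes Hölder's inequality tight: set $z_k = -\mathrm{sign}(\mu_k)\,|\mu_k|^{q-1}$ componentwise and $\beta = \Vert\bm{z}\Vert_p$, which lies in $\mathcal{K}_p$ by construction. Using the conjugacy identity $p(q-1) = q$ one obtains $\Vert\bm{z}\Vert_p = \Vert\bm{\mu}\Vert_q^{q/p}$ and $\bm{\mu}^T\bm{z} = -\Vert\bm{\mu}\Vert_q^{q}$; substituting this pair into the dual-cone inequality $\lambda\beta + \bm{\mu}^T\bm{z} \geq 0$ yields $\lambda\,\Vert\bm{\mu}\Vert_q^{q/p} \geq \Vert\bm{\mu}\Vert_q^{q}$, and dividing by $\Vert\bm{\mu}\Vert_q^{q/p} > 0$ together with $q - q/p = 1$ gives $\lambda \geq \Vert\bm{\mu}\Vert_q$. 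Combining the two inclusions proves the claim.

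The only steps that require any care are in the reverse inclusion: picking the correct equality-case vector $\bm{z}$, keeping the exponent bookkeeping straight (the identities $p(q-1)=q$ and $q-q/p=1$, both equivalent to $\tfrac1p+\tfrac1q=1$), and separately handling the trivial case $\bm{\mu}=\bm{0}$. Everything else is a one-line application of Hölder's inequality, and the result is standard (it appears in \cite{boyd2004convex}), so the write-up can be kept short.
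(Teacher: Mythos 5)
Your proof is correct. Note that the paper itself states this lemma without proof, simply citing \cite{boyd2004convex}, so there is no in-paper argument to compare against; your two-inclusion argument via H\"older's inequality, with the extremal choice $z_k=-\mathrm{sign}(\mu_k)\,|\mu_k|^{q-1}$ and $\beta=\Vert\bm{z}\Vert_p$ making the inequality tight in the reverse direction, is exactly the standard derivation given in that reference, and your exponent bookkeeping ($p(q-1)=q$ and $q-q/p=1$) as well as the separate handling of $\lambda\geq 0$ and $\bm{\mu}=\bm{0}$ all check out.
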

Rather than directly deriving the existence conditions of separating hyperplanes for two disjoint general ellipsoids, we first characterize the conditions for a hyperplane that ensures an entire ellipsoid on one side of it, as stated in the following lemma.

\begin{lem}
  \label{lem:single plane}
  Given a general ellipsoid $\mathcal{G}_i$ described by \eqref{eqn:geometries} and \eqref{eqn:general ellipsoid}, then the hyperplane
  \begin{subequations}
    \begin{equation}
      \mathcal{H} = \{\bm{y} ~|~ \bm{n}_{ij}^T \bm{y} = \gamma_{ij}\},\label{eqn:single hyperplane:plane}
    \end{equation}
    ensures entire $\mathcal{G}_i$ on one side of it in the following sense,
    \begin{equation}
      \bm{n}_{ij}^T \bm{y}_i \geq \gamma_{ij}, \quad \forall \bm{y}_i \in \mathcal{G}_i,\label{eqn:single hyperplane:separation}
    \end{equation}
    if and only if $\bm{n}_{ij} \neq \bm{0}$ and
    \begin{equation}
      \Vert (\bm{R}_i \bm{Q}_i)^T \bm{n}_{ij} \Vert_{q_i} \leq \bm{\rho}_i^T \bm{n}_{ij} - \gamma_{ij}, \label{eqn:single hyperplane:analytical}\\
    \end{equation}
  \end{subequations}
  with $\frac{1}{q_i}+\frac{1}{p_i} = 1$.
\end{lem}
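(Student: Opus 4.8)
The plan is to remove the universal quantifier in \eqref{eqn:single hyperplane:separation} by collapsing it to a single worst-case (maximization) scalar inequality, and then to evaluate that maximum in closed form using $\ell_{p}$--$\ell_{q}$ duality. First I would substitute the parametrization $\bm{y}_i = \bm{R}_i \bm{x} + \bm{\rho}_i$, $\bm{x} \in \mathcal{B}_i$, from \eqref{eqn:geometries} into \eqref{eqn:single hyperplane:separation} and rearrange, which shows that \eqref{eqn:single hyperplane:separation} is equivalent to
\[
\sup_{\bm{x} \in \mathcal{B}_i} \big(-\bm{R}_i^T \bm{n}_{ij}\big)^T \bm{x} \;\le\; \bm{\rho}_i^T \bm{n}_{ij} - \gamma_{ij};
\]
here I use the trivial fact that ``$\bm{c}^T\bm{x} \ge r$ for all $\bm{x}\in\mathcal{B}_i$'' holds if and only if $\sup_{\bm{x}\in\mathcal{B}_i}(-\bm{c})^T\bm{x} \le -r$, together with the observation that the supremum is in fact attained because $\mathcal{B}_i$ is compact, so the quantified condition and the single scalar condition are genuinely equivalent.

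Next I would apply the change of variables $\bm{x} = \bm{Q}_i \bm{z}$, which is a bijection since $\bm{Q}_i$ is invertible, and which by \eqref{eqn:general ellipsoid} maps $\mathcal{B}_i$ exactly onto the closed unit $p_i$-ball $\{\bm{z} \mid \|\bm{z}\|_{p_i}\le 1\}$. The supremum above then becomes $\sup_{\|\bm{z}\|_{p_i}\le 1}\big({-}(\bm{R}_i\bm{Q}_i)^T \bm{n}_{ij}\big)^T \bm{z}$. Invoking the dual-norm identity $\sup_{\|\bm{z}\|_{p}\le 1}\bm{c}^T\bm{z} = \|\bm{c}\|_{q}$ for $\tfrac{1}{p_i}+\tfrac{1}{q_i}=1$ --- which is the tightness of H\"older's inequality and is the normed counterpart of the $p$-norm-cone duality recorded in Lemma~\ref{lem:norm cone} --- and noting that an overall sign is irrelevant inside a norm, this supremum equals $\|(\bm{R}_i\bm{Q}_i)^T\bm{n}_{ij}\|_{q_i}$. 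Substituting back gives precisely \eqref{eqn:single hyperplane:analytical}. Since every step in the chain is an equivalence, both directions of the ``if and only if'' follow simultaneously.

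Finally, I would dispatch the condition $\bm{n}_{ij}\neq\bm{0}$: it is needed only so that $\mathcal{H}$ in \eqref{eqn:single hyperplane:plane} is a genuine hyperplane rather than $\emptyset$ or $\mathbb{R}^d$, and it is inert in the equivalence above (if $\bm{n}_{ij}=\bm{0}$, both \eqref{eqn:single hyperplane:separation} and \eqref{eqn:single hyperplane:analytical} reduce to $\gamma_{ij}\le 0$), so it is simply carried along on both sides of the statement.

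I do not anticipate a serious obstacle. The only points requiring care are (i) that the inequality direction is preserved across the two rewrites, (ii) that compactness of $\mathcal{B}_i$ legitimizes replacing the ``for all'' by the attained maximum, and (iii) citing the dual-norm identity at the needed generality (any $p_i>1$, any invertible $\bm{Q}_i$, and $\bm{R}_i\in SO(d)$, where in fact orthogonality of $\bm{R}_i$ plays no role beyond invertibility).
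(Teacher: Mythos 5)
Your proof is correct and rests on the same $\ell_{p}$--$\ell_{q}$ duality as the paper's, but the route is genuinely different in its mechanics. The paper works conically: it rewrites $\mathcal{B}_i$ via the $p$-norm cone $\mathcal{K}_{p_i}$, homogenizes the separation inequality by scaling $(1,\bm{z})$ to $(\beta,\beta\bm{z})$, identifies the result as membership of $\bigl(\bm{\rho}_i^T\bm{n}_{ij}-\gamma_{ij},\,(\bm{R}_i\bm{Q}_i)^T\bm{n}_{ij}\bigr)$ in the dual cone $\mathcal{K}_{p_i}^*$, and then applies Lemma~\ref{lem:norm cone}, running this chain once in each direction. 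You instead collapse the universal quantifier into the support function $\sup_{\Vert\bm{z}\Vert_{p_i}\le 1}\bigl(-(\bm{R}_i\bm{Q}_i)^T\bm{n}_{ij}\bigr)^T\bm{z}$ and evaluate it with the dual-norm identity, so both directions of the equivalence fall out of a single chain. Your version is more elementary (no cone formalism, no homogenization step) and is more careful on two points the paper treats loosely: that $\bm{n}_{ij}\neq\bm{0}$ plays no role in the equivalence of \eqref{eqn:single hyperplane:separation} and \eqref{eqn:single hyperplane:analytical} (both degenerate to $\gamma_{ij}\le 0$ when $\bm{n}_{ij}=\bm{0}$) and is needed only so that $\mathcal{H}$ is a genuine hyperplane, and that orthogonality of $\bm{R}_i$ is never used beyond invertibility. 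One small simplification: your compactness/attainment remark is unnecessary, since ``$f(\bm{x})\le c$ for all $\bm{x}$ in a nonempty set'' is equivalent to ``$\sup f\le c$'' whether or not the supremum is attained. What the paper's conic packaging buys is a template that transfers directly to other proper cones; what yours buys is brevity and transparency.
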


\begin{proof}
  \textbf{Sufficiency ($\Rightarrow$):}
  Since $\mathcal{G}_i$ is a general ellipsoid and thus bounded, there always exists a hyperplane $\mathcal{H} = \{\bm{y} ~|~ \bm{n}_{ij}^T \bm{y} = \gamma_{ij}\}$ that satisfies condition \eqref{eqn:single hyperplane:plane}.
  The assumption that $\mathcal{H}$ is a hyperplane implies $\bm{n}_{ij} \neq \bm{0}$.
  Using the definition of the $p$-norm cone in Lemma \ref{lem:norm cone}, the ellipsoid $\mathcal{G}_i$ can be reformulated as
  \begin{subequations}
    \label{eqn:general ellipsoid:reformulation}
    \begin{align}
      \mathcal{G}_i & = \left\{ \bm{y} ~\big|~ \bm{y} = \bm{R}_i \bm{x} + \bm{\rho}_i, ~\Vert \bm{Q}_i^{-1} \bm{x} \Vert_{p_i} \leq 1 \right\}                                                   \\
                    & = \left\{ \bm{y} ~\big|~ \bm{y} = \bm{R}_i \bm{Q}_i \bm{z} + \bm{\rho}_i, ~\Vert \bm{z} \Vert_{p_i} \leq 1 \right\}                                                        \\
                    & = \left\{ \bm{y} ~\big|~ \bm{y} = \bm{R}_i \bm{Q}_i \bm{z} + \bm{\rho}_i, ~\forall (1, \bm{z}) \in \mathcal{K}_{p_i} \right\}. \label{eqn:general ellipsoid:def:norm cone}
    \end{align}
  \end{subequations}
  \begin{subequations}
    The condition \eqref{eqn:single hyperplane:separation} can then be reformulated as
    \begin{equation}
      \label{eqn:proof:single plane:a}
      \bm{n}_{ij}^T (\bm{R}_i \bm{Q}_i \bm{z}) + (\bm{\rho}_i^T \bm{n}_{ij} - \gamma_{ij}) \cdot 1 \geq 0, \quad \forall (1, \bm{z}) \in \mathcal{K}_{p_i}.
    \end{equation}
    Since $\mathcal{K}_{p_i}$ is a cone, $(1, \bm{z}) \in \mathcal{K}_{p_i}$ and $\beta \geq 0$ imply $(\beta, \beta \bm{z}) \in \mathcal{K}_{p_i}$.
    Consequently, multiplying both sides of \eqref{eqn:proof:single plane:a} by $\beta$, the following inequality is obtained
    \begin{equation}
      \bm{n}_{ij}^T (\bm{R}_i \bm{Q}_i \tilde{\bm{z}}) + (\bm{\rho}_i^T \bm{n}_{ij} - \gamma_{ij}) \cdot \beta \geq 0, \quad \forall (\beta, \tilde{\bm{z}}) \in \mathcal{K}_{p_i},
    \end{equation}
    where $\tilde{\bm{z}} = \beta \bm{z}$.
    By the definition of the dual cone \eqref{eqn:dual cone},
    \begin{equation}
      \big(\bm{\rho}_i^T \bm{n}_{ij} - \gamma_{ij},~ (\bm{R}_i \bm{Q}_i)^T \bm{n}_{ij} \big) \in \mathcal{K}_{p_i}^*.
    \end{equation}
    Finally, applying Lemma \ref{lem:norm cone}, the condition \eqref{eqn:single hyperplane:analytical} is obtained.
  \end{subequations}

  \textbf{Necessity ($\Leftarrow$):}
  \begin{subequations}
    If there exist a nonzero normal vector $\bm{n}_{ij}$ and an offset $\gamma_{ij}$ such that condition \eqref{eqn:single hyperplane:analytical} is satisfied, then Lemma \ref{lem:norm cone} implies
    \begin{equation}
      \Big(\bm{\rho}_i^T \bm{n}_{ij} - \gamma_{ij},~ (\bm{R}_i \bm{Q}_i)^T \bm{n}_{ij}\Big) \in \mathcal{K}_{p_i}^*.
    \end{equation}
    By the definition of dual cone, the following inequality holds:
    \begin{equation}
      \bm{n}_{ij}^T (\bm{R}_i \bm{Q}_i \bm{z}) + (\bm{\rho}_i^T \bm{n}_{ij} - \gamma_{ij}) \cdot 1 \geq 0, \quad \forall (1, \bm{z}) \in \mathcal{K}_{p_i}.
    \end{equation}
    Rearranging the above inequality, we have
    \begin{equation}
      \bm{n}_{ij}^T (\bm{R}_i \bm{Q}_i \bm{z} + \bm{\rho}_i) \geq \gamma_{ij}, \quad \forall (1, \bm{z}) \in \mathcal{K}_{p_i}.
    \end{equation}
    Noting that $(1, \bm{z}) \in \mathcal{K}_{p_i} \implies (\bm{R}_i \bm{Q}_i \bm{z} + \bm{\rho}_i) \in \mathcal{G}_i$ according to \eqref{eqn:general ellipsoid:def:norm cone}, we obtain the final inequality:
    \begin{equation}
      \bm{n}_{ij}^T \bm{y} \geq \gamma_{ij}, \quad \forall \bm{y} \in \mathcal{G}_i.
    \end{equation}
  \end{subequations}
  The proof is complete.
\end{proof}

Based on Lemma \ref{lem:single plane}, the existence conditions of separating hyperplanes for two disjoint general ellipsoids is characterized by the following theorem.
\begin{thm}
  \label{thm:feasible set of separating hyperplane}
  Let $\mathcal{G}_i$ and $\mathcal{G}_j$ be two disjoint general ellipsoids described by \eqref{eqn:geometries} and \eqref{eqn:general ellipsoid}.
  Then, $\mathcal{H} = \{ \bm{y} ~|~ \bm{n}_{ij}^T \bm{y} = \gamma_{ij} \}$ is a separating hyperplane for $\mathcal{G}_i$ and $\mathcal{G}_j$ in the following sense:
  \begin{equation}
    \label{eqn:separating hyperplane:new:sense}
    \bm{n}_{ij}^T \bm{y}_i \geq \gamma_{ij} \geq \bm{n}_{ij}^T \bm{y}_j, \quad \forall \bm{y}_i \in \mathcal{G}_i,~ \forall \bm{y}_j \in \mathcal{G}_j,
  \end{equation}
  if and only if there exists a nonzero normal vector $\bm{n}_{ij} \neq \bm{0}$ and an offset $\gamma_{ij}$ such that
  \begin{subequations}
    \label{eqn:separating hyperplane:new}
    \begin{gather}
      \Vert (\bm{R}_i \bm{Q}_i)^T \bm{n}_{ij} \Vert_{q_i} \leq \bm{\rho}_i^T \bm{n}_{ij} - \gamma_{ij}, \label{eqn:separating hyperplane:new:i} \\
      \Vert -(\bm{R}_j \bm{Q}_j)^T \bm{n}_{ij} \Vert_{q_j} \leq -\bm{\rho}_j^T \bm{n}_{ij} + \gamma_{ij}. \label{eqn:separating hyperplane:new:j}
    \end{gather}
  \end{subequations}
\end{thm}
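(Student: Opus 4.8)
The plan is to reduce the two-sided separation condition \eqref{eqn:separating hyperplane:new:sense} to two independent one-sided conditions and then invoke Lemma \ref{lem:single plane} once for each ellipsoid. First I would observe that the chain of inequalities in \eqref{eqn:separating hyperplane:new:sense} holds if and only if both of the following hold: (a) $\bm{n}_{ij}^T \bm{y}_i \geq \gamma_{ij}$ for all $\bm{y}_i \in \mathcal{G}_i$, and (b) $\bm{n}_{ij}^T \bm{y}_j \leq \gamma_{ij}$ for all $\bm{y}_j \in \mathcal{G}_j$. This split is immediate because the two universal quantifiers range over the two sets independently and $\gamma_{ij}$ is a fixed scalar wedged between the two sides.

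Condition (a) is handled directly: it is literally the hypothesis \eqref{eqn:single hyperplane:separation} of Lemma \ref{lem:single plane} instantiated for the ellipsoid $\mathcal{G}_i$ with normal vector $\bm{n}_{ij}$ and offset $\gamma_{ij}$. Hence, by Lemma \ref{lem:single plane}, (a) is equivalent to $\bm{n}_{ij} \neq \bm{0}$ together with \eqref{eqn:separating hyperplane:new:i}. For (b) I would first put it into the form required by Lemma \ref{lem:single plane} by multiplying through by $-1$: (b) is equivalent to $(-\bm{n}_{ij})^T \bm{y}_j \geq (-\gamma_{ij})$ for all $\bm{y}_j \in \mathcal{G}_j$. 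Applying Lemma \ref{lem:single plane} to $\mathcal{G}_j$ with the normal vector $\bm{n}' := -\bm{n}_{ij}$ and offset $\gamma' := -\gamma_{ij}$, this is equivalent to $\bm{n}' \neq \bm{0}$ — the same as $\bm{n}_{ij} \neq \bm{0}$ — together with $\Vert (\bm{R}_j \bm{Q}_j)^T \bm{n}' \Vert_{q_j} \leq \bm{\rho}_j^T \bm{n}' - \gamma'$, where $\tfrac{1}{p_j} + \tfrac{1}{q_j} = 1$; substituting $\bm{n}' = -\bm{n}_{ij}$ and $\gamma' = -\gamma_{ij}$ turns this last inequality into exactly \eqref{eqn:separating hyperplane:new:j}.

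Conjoining the characterizations of (a) and (b), and noting that both demand $\bm{n}_{ij} \neq \bm{0}$, yields the claimed biconditional; the existence quantifier over $(\bm{n}_{ij}, \gamma_{ij})$ simply passes through unchanged. To close the loop I would add that, since $\mathcal{G}_i$ and $\mathcal{G}_j$ are nonempty disjoint convex sets, Theorem \ref{thm:separating hyperplane} guarantees some $(\bm{n}_{ij}, \gamma_{ij})$ satisfying \eqref{eqn:separating hyperplane:new:sense} exists, so the system \eqref{eqn:separating hyperplane:new} is in fact feasible and the statement is non-vacuous.

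I do not anticipate a genuine obstacle: all the mathematical content is carried by Lemma \ref{lem:single plane}, and what remains is sign bookkeeping when converting the ``$\leq$'' side into the ``one ellipsoid on one side'' form and verifying that the conjugate exponent $q_j$ is the one correctly associated with $\mathcal{G}_j$. The only point needing a little care is checking that the two $\bm{n}_{ij} \neq \bm{0}$ requirements coming from the two applications are mutually consistent — which they are, since $\bm{n}_{ij} \neq \bm{0} \iff -\bm{n}_{ij} \neq \bm{0}$ — so they can be merged into the single nonzero-normal hypothesis stated in the theorem.
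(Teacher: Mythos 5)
Your proposal is correct and follows essentially the same route as the paper's proof: handle the $\mathcal{G}_i$ side by a direct application of Lemma \ref{lem:single plane}, and handle the $\mathcal{G}_j$ side by negating the inequality and applying the same lemma with normal $-\bm{n}_{ij}$ and offset $-\gamma_{ij}$. If anything, your sign and index bookkeeping is cleaner than the paper's intermediate display \eqref{eqn:separating hyperplane:new:proof:j}, which contains typographical slips ($q_i$ and $\bm{\rho}_i$ where $q_j$ and $\bm{\rho}_j$ are meant).
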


\begin{proof}
  According to Lemma \ref{lem:single plane}, the first inequality in \eqref{eqn:separating hyperplane:new:sense} holds if and only if condition \eqref{eqn:separating hyperplane:new:i} is satisfied.
  It remains to prove the second inequality in \eqref{eqn:separating hyperplane:new:sense}, which is equivalent to the following inequality:
  \begin{subequations}
    \begin{equation}
      \label{eqn:separating hyperplane:sense:proof:j}
      - \bm{n}_{ij}^T \bm{y}_j \geq -\gamma_{ij}, \quad \forall \bm{y}_j \in \mathcal{G}_j.
    \end{equation}
    Applying Lemma \ref{lem:single plane}, the inequality \eqref{eqn:separating hyperplane:sense:proof:j} holds if and only if
    \begin{equation}
      \label{eqn:separating hyperplane:new:proof:j}
      \Vert - (\bm{R}_j \bm{Q}_j)^T \bm{n}_{ij} \Vert_{q_i} \leq -\bm{\rho}_i^T \bm{n}_{ij} + \gamma_{ij},
    \end{equation}
    which is equivalent to \eqref{eqn:separating hyperplane:new:j}.
  \end{subequations}
  The proof is complete.
\end{proof}

Theorem \ref{thm:feasible set of separating hyperplane} provides the analytical form of the feasible set of separating hyperplanes for two general ellipsoids.
By incorporating the constraints \eqref{eqn:separating hyperplane:new} into the CBFs framework, collision avoidance between two general ellipsoids can be achieved.
However, while the constraints \eqref{eqn:separating hyperplane:new} can be incorporated into the CBFs framework, ensuring $\bm{n}_{ij} \neq \bm{0}$ remains a challenge.
This challenge will be addressed in the next section.

\section{Collision Avoidance Control via CBFs With Separating Hyperplanes}
\label{sec:collision-free cbfs}
In this section, a novel collision avoidance control method is proposed by incorporating the collision-free conditions \eqref{eqn:separating hyperplane:new} into the CBF framework.

The object $i$ is assumed to be fully controlled, more precisely, the dynamics of rotation matrix $\bm{R}_i$ and translation vector $\bm{\rho}_i$ is described by,
\begin{equation}
  \label{eqn:object dynamics}
  \dot{\bm{R}}_i = \bm{R}_i \bm{\widehat{\omega}}_i, \quad \dot{\bm{\rho}}_i = \bm{R}_i \bm{v}_i,
\end{equation}
where $\bm{\omega}_i \in \mathbb{R}^{\frac{d(d-1)}{2}}$ and $\bm{v}_i \in \mathbb{R}^d$ are the angular velocity and translational velocity of object $i$ in its body frame.
And $\wedge$ is the operator that maps a vector to a screw symmetric matrix, for examples, given $\omega \in \mathbb{R}$ and $\bm{\omega} = (\omega_1,\omega_2,\omega_3) \in \mathbb{R}^3$, the corresponding operator $\wedge$ is defined as,
\begin{equation}
  \label{eqn:screw symmetric operator}
  \widehat{\omega} = \begin{bmatrix} 0& -\omega \\ \omega &0 \end{bmatrix}, \quad
  \widehat{\bm{\omega}} =
  \begin{bmatrix}
    0         & -\omega_3 & \omega_2  \\
    \omega_3  & 0         & -\omega_1 \\
    -\omega_2 & \omega_1  & 0
  \end{bmatrix}.
\end{equation}

To evaluate the collision-free conditions \eqref{eqn:separating hyperplane:new} between two convex primitives, a separating hyperplane is required.
A natural approach is to extend the state of the system with a nonzero normal vector $\bm{n}_{ij} \in \mathbb{R}^{d}$ and an offset $\gamma_{ij} \in \mathbb{R}$.
However, fulfilling the requirement $\bm{n}_{ij} \neq \bm{0}$ without introducing conservatism remains a challenge.
In this paper, this challenge is addressed by enforcing the normal vector to be a unit vector, i.e., $\bm{n}_{ij}^T\bm{n}_{ij} = 1$.
This approach does not introduce any conservatism because the hyperplane $\{\bm{y} ~|~ \bm{n}_{ij}^T \bm{y} = \gamma_{ij}\}$ is equivalent to $\{\bm{y} ~|~ \widehat{\bm{n}}_{ij}^T \bm{y} = \widehat{b}_{ij}\}$, where $\widehat{\bm{n}}_{ij} = {\bm{n}_{ij}}/{\Vert \bm{n}_{ij} \Vert_2}$ and $\widehat{b}_{ij} = {\gamma_{ij}}/{\Vert \bm{n}_{ij} \Vert_2}$.

To preserve the norm of $\bm{n}_{ij}$, the dynamics of the normal vector $\bm{n}_{ij}$ and the offset $\gamma_{ij}$ are designed as
\begin{equation}
  \label{eqn:hyperplane dynamics}
  \dot{\bm{n}}_{ij} = (\bm{I}_d - \bm{n}_{ij}\bm{n}_{ij}^T) \bm{\eta}_{ij}, \quad \dot{\gamma}_{ij} = \delta_{ij},
\end{equation}
where $\bm{\eta}_{ij} \in \mathbb{R}^d$ and $\delta_{ij} \in \mathbb{R}$ are the control inputs for $\bm{n}_{ij}$ and $\gamma_{ij}$, respectively.
The above dynamics ensure that the norm of $\bm{n}_{ij}$ remains unchanged, as the time derivative of $\bm{n}_{ij}^T\bm{n}_{ij}$ satisfies
\begin{equation}
  \begin{aligned}
    2 \bm{n}_{ij}^T\dot{\bm{n}}_{ij}
     & = 2 \bm{n}_{ij}^T(\bm{I}_d - \bm{n}_{ij}\bm{n}_{ij}^T) \bm{\eta}_{ij} \\
     & = 2(\bm{n}_{ij}^T - \bm{n}_{ij}^T) \bm{\eta}_{ij} = 0.
  \end{aligned}
\end{equation}
For notational simplicity and clarity, intermediate variables $\lambda_i, \lambda_j \in \mathbb{R}$ and $\bm{\mu}_i, \bm{\mu}_j \in \mathbb{R}^d$ are defined as follows:
\begin{subequations}
  \label{eqn:intermediate variables}
  \begin{gather}
    \lambda_i = \bm{\rho}_i^T \bm{n}_{ij} - \gamma_{ij}, \quad \bm{\mu}_i = (\bm{R}_i \bm{Q}_i)^T \bm{n}_{ij}, \\
    \lambda_j = -\bm{\rho}_j^T \bm{n}_{ij} + \gamma_{ij}, \quad \bm{\mu}_j = -(\bm{R}_j \bm{Q}_j)^T \bm{n}_{ij}.
  \end{gather}
\end{subequations}
Following the discussion in Section \ref{sec:collision-free conditions}, the collision-free conditions \eqref{eqn:separating hyperplane:new} can be expressed as
\begin{subequations}
  \label{eqn:collision-free CBF}
  \begin{gather}
    h_i(\lambda_i, \bm{\mu}_i) = \lambda_i - \Vert \bm{\mu}_i \Vert_{q_i} \geq \bm{0}, \label{eqn:collision-free CBF:i}\\
    h_j(\lambda_j, \bm{\mu}_j) = \lambda_j - \Vert \bm{\mu}_j \Vert_{q_j} \geq \bm{0}. \label{eqn:collision-free CBF:j}
  \end{gather}
\end{subequations}
To incorporate the above constraints into the CBFs framework, the time derivatives of functions $h_i$ and $h_j$ are required, which are given by the following lemma.
\begin{lem}
  \label{lem:time derivatives of CBFs}
  The time derivatives of functions $h_i(\lambda_i, \bm{\mu}_i)$ and $h_j(\lambda_j, \bm{\mu}_j)$ along the system dynamics \eqref{eqn:object dynamics}, \eqref{eqn:hyperplane dynamics} and \eqref{eqn:intermediate variables} are derived as,
  \begin{subequations}
    \label{eqn:time derivatives of CBFs}
    \begin{gather}
      \dot{h}_i = \bm{a}_i^T \bm{\omega}_i + \bm{b}_i^T \bm{v}_i + \bm{c}_i^T \bm{\eta}_{ij} + \bm{d}_i^T \delta_{ij}, \\
      \dot{h}_j = \bm{a}_j^T \bm{\omega}_j + \bm{b}_j^T \bm{v}_j + \bm{c}_j^T \bm{\eta}_{ij} + \bm{d}_j^T \delta_{ij},
    \end{gather}
  \end{subequations}
  where the coefficient vectors are detailed as
  \begin{subequations}
    \label{eqn:time derivatives of CBFs:BCD}
    \begin{gather*}
      \bm{b}_i = \bm{R}_i^T \bm{n}_{ij} \frac{\partial h_i}{\partial \lambda_i}  , \quad \bm{d}_i =  - \frac{\partial h_i}{\partial \lambda_i} , \\
      \bm{c}_i = (\bm{I}_d - \bm{n}_{ij}\bm{n}_{ij}^T) \left(\bm{\rho}_i \frac{\partial h_i}{\partial \lambda_i} + \bm{R}_i \bm{Q}_i \frac{\partial h_i}{\partial \bm{\mu}_i} \right),\\
      \bm{b}_j = -\bm{R}_j^T \bm{n}_{ij} \frac{\partial h_j}{\partial \lambda_j}  , \quad \bm{d}_j = \frac{\partial h_j}{\partial \lambda_j} , \\
      \bm{c}_j = - (\bm{I}_d - \bm{n}_{ij}\bm{n}_{ij}^T) \left(\bm{\rho}_j \frac{\partial h_j}{\partial \lambda_j} + \bm{R}_j \bm{Q}_j \frac{\partial h_j}{\partial \bm{\mu}_j}\right).
    \end{gather*}
  \end{subequations}
  For $d\!=\!2$, the coefficient vectors $\bm{a}_i$ and $\bm{a}_j$ are given as
  \begin{subequations}
    \begin{gather}
      \label{eqn:time derivatives of CBFs:A:2}
      \bm{a}_i = \bm{n}_{ij}^T \bm{R}_i \widehat{1} \bm{Q}_i \frac{\partial h_i}{\partial \bm{\mu}_i} , \\
      \bm{a}_j = - \bm{n}_{ij}^T \bm{R}_j \widehat{1} \bm{Q}_j \frac{\partial h_j}{\partial \bm{\mu}_j}.
    \end{gather}
  \end{subequations}
  For $d \!=\!3$, the coefficient vectors $\bm{a}_i$ and $\bm{a}_j$ are given as
  \begin{subequations}
    \begin{gather}
      \label{eqn:time derivatives of CBFs:A:3}
      \bm{a}_i = \widehat{(\bm{R}_i^T \bm{n}_{ij})}^T \bm{Q}_i \frac{\partial h_i}{\partial \bm{\mu}_i}, \\
      \bm{a}_j = -\widehat{(\bm{R}_j^T \bm{n}_{ij})}^T \bm{Q}_j \frac{\partial h_j}{\partial \bm{\mu}_j}.
    \end{gather}
  \end{subequations}
\end{lem}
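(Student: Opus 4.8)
The plan is to apply the chain rule to $h_i(\lambda_i,\bm{\mu}_i)$ and $h_j(\lambda_j,\bm{\mu}_j)$, treating $\lambda_\bullet$ and $\bm{\mu}_\bullet$ as intermediate variables driven by the state $(\bm{R}_\bullet,\bm{\rho}_\bullet,\bm{n}_{ij},\gamma_{ij})$ through \eqref{eqn:intermediate variables}, and then substitute the kinematics \eqref{eqn:object dynamics} and \eqref{eqn:hyperplane dynamics}. Concretely, $\dot{h}_i = \frac{\partial h_i}{\partial \lambda_i}\dot{\lambda}_i + \big(\frac{\partial h_i}{\partial \bm{\mu}_i}\big)^T\dot{\bm{\mu}}_i$, so the whole computation reduces to expressing $\dot{\lambda}_i$ and $\dot{\bm{\mu}}_i$ as linear functions of the control inputs $(\bm{\omega}_i,\bm{v}_i,\bm{\eta}_{ij},\delta_{ij})$, and likewise for object $j$.

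First I would differentiate $\lambda_i = \bm{\rho}_i^T\bm{n}_{ij} - \gamma_{ij}$: using $\dot{\bm{\rho}}_i = \bm{R}_i\bm{v}_i$, $\dot{\bm{n}}_{ij} = (\bm{I}_d - \bm{n}_{ij}\bm{n}_{ij}^T)\bm{\eta}_{ij}$ and $\dot{\gamma}_{ij} = \delta_{ij}$, this gives $\dot{\lambda}_i = (\bm{R}_i^T\bm{n}_{ij})^T\bm{v}_i + \bm{\rho}_i^T(\bm{I}_d - \bm{n}_{ij}\bm{n}_{ij}^T)\bm{\eta}_{ij} - \delta_{ij}$, which already produces the $\bm{v}_i$, a partial $\bm{\eta}_{ij}$, and the $\delta_{ij}$ contributions, i.e.\ $\bm{b}_i$ and $\bm{d}_i$. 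Next I would differentiate $\bm{\mu}_i = \bm{Q}_i^T\bm{R}_i^T\bm{n}_{ij}$. Using $\dot{\bm{R}}_i = \bm{R}_i\widehat{\bm{\omega}}_i$ and the skew-symmetry $\widehat{\bm{\omega}}_i^T = -\widehat{\bm{\omega}}_i$, we obtain $\dot{\bm{\mu}}_i = -\bm{Q}_i^T\widehat{\bm{\omega}}_i(\bm{R}_i^T\bm{n}_{ij}) + \bm{Q}_i^T\bm{R}_i^T(\bm{I}_d - \bm{n}_{ij}\bm{n}_{ij}^T)\bm{\eta}_{ij}$. The second term contributes the remaining part of the $\bm{\eta}_{ij}$ coefficient; combining it with the $\bm{\eta}_{ij}$ term from $\dot{\lambda}_i$, transposing, and using the symmetry of $\bm{I}_d - \bm{n}_{ij}\bm{n}_{ij}^T$ yields exactly $\bm{c}_i$. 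For the first term the key step is to pull $\bm{\omega}_i$ out of the operator $\widehat{\cdot}$: in $d=3$ one uses $\widehat{\bm{\omega}}_i\bm{x} = -\widehat{\bm{x}}\,\bm{\omega}_i$ with $\bm{x} = \bm{R}_i^T\bm{n}_{ij}$, and in $d=2$ the analogous identity $\widehat{\omega_i}\,\bm{x} = \omega_i\widehat{1}\,\bm{x}$; in either case one reads off $\bm{a}_i$ after a transpose together with $\widehat{\cdot}^{\,T} = -\widehat{\cdot}$. The formulas for object $j$ follow identically, the only differences being the extra minus signs already built into $\lambda_j$ and $\bm{\mu}_j$ in \eqref{eqn:intermediate variables} and the fact that, while $\bm{n}_{ij}$ enters $h_j$, the quantities $\dot{\bm{R}}_j$ and $\dot{\bm{\rho}}_j$ are driven by the controls of object $j$.

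The main obstacle is the rotational term: correctly commuting $\bm{\omega}_i$ past the hat operator and tracking the resulting signs and transposes, since the intermediate expression $-\bm{Q}_i^T\widehat{\bm{\omega}}_i\bm{R}_i^T\bm{n}_{ij}$ admits several superficially different but equivalent rewritings, and the $d=2$ versus $d=3$ split must be handled separately. One must also keep $\partial h_i/\partial \lambda_i$ and $\partial h_i/\partial \bm{\mu}_i$ purely symbolic, since $h_i$ involves the $q_i$-norm $\Vert\bm{\mu}_i\Vert_{q_i}$, which fails to be differentiable at $\bm{\mu}_i = \bm{0}$; the identity is therefore stated on the set where these partials exist, and apart from that every remaining step is a direct substitution and regrouping of terms. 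As a consistency check I would verify that $\bm{c}_i$ and $\bm{c}_j$ are automatically orthogonal to $\bm{n}_{ij}$, which they must be because $\dot{\bm{n}}_{ij}\perp\bm{n}_{ij}$, confirming that only the component of $\bm{\eta}_{ij}$ tangent to the unit sphere affects $\dot{h}_i$ and $\dot{h}_j$.
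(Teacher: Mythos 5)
Your proposal is correct and follows essentially the same route as the paper's proof: a chain-rule expansion of $\dot h_i$ through $\dot\lambda_i$ and $\dot{\bm{\mu}}_i$, substitution of the kinematics \eqref{eqn:object dynamics} and \eqref{eqn:hyperplane dynamics}, and the same $d=2$ versus $d=3$ case split to commute $\bm{\omega}_i$ past the hat operator (your use of $\widehat{\bm{\omega}}_i^T=-\widehat{\bm{\omega}}_i$ is only a cosmetic rearrangement of the paper's $\bm{Q}_i^T\widehat{\bm{\omega}}_i^T\bm{R}_i^T\bm{n}_{ij}$ term). The added sanity check that $\bm{c}_i\perp\bm{n}_{ij}$ and the remark on differentiability at $\bm{\mu}_i=\bm{0}$ are sound but not needed for the lemma itself.
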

\begin{proof}
  The proof is given in Appendix \ref{app:proof:lem:time derivatives of CBFs}.
\end{proof}
Note that, the differences between \eqref{eqn:time derivatives of CBFs:A:2} and \eqref{eqn:time derivatives of CBFs:A:3} are caused by the slightly differences of the $\wedge$ operator between $d=2$ and $d=3$ as shown in \eqref{eqn:screw symmetric operator}.

To establish that the collision-free CBFs \eqref{eqn:collision-free CBF} constitute valid control barrier functions, we must verify that their coefficient vectors are non-zero and continuously differentiable with respect to the system state. These properties are critical to ensure the CBFs are well-defined and can reliably enforce safety constraints within the control framework. A detailed discussion of their necessity is provided in \cite[Remark 5]{ames2019control}.
The validity of the proposed CBFs is formally established by the following theorem.

\begin{thm}
  \label{thm:validity of CBFs}
  Define the safe set $\mathcal{S}_{ij}$ as
  $$
    \mathcal{S}_{ij} = \left\{ (\bm{R}_i,\bm{\rho}_i,\bm{R}_j,\bm{\rho}_j, \bm{n}_{ij}, \gamma_{ij}) \,\bigg|\, h_i \geq 0,\ h_j \geq 0 \right\}.
  $$
  Then, $h_i$ and $h_j$ are valid CBFs for the dynamics \eqref{eqn:object dynamics} and \eqref{eqn:hyperplane dynamics} on $\mathcal{S}_{ij}$, as formalized below:
  \begin{enumerate}
    \item The coefficient vectors $(\bm{a}_i, \bm{b}_i, \bm{c}_i, \bm{d}_i)$ and $(\bm{a}_j, \bm{b}_j, \bm{c}_j,\bm{d}_j)$ given by Lemma \ref{lem:time derivatives of CBFs} are continuous.
    \item The coefficient vectors $(\bm{a}_i, \bm{b}_i, \bm{c}_i, \bm{d}_i)$ and $(\bm{a}_j, \bm{b}_j, \bm{c}_j,\bm{d}_j)$ will not vanish, i.e., become zero vectors.
    \item There exists at least one control input that simultaneously satisfies $\dot{h}_i \geq -\alpha(h_i)$ and $\dot{h}_j \geq -\alpha(h_j)$.
  \end{enumerate}
\end{thm}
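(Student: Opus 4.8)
The plan is to verify the three listed items in turn; each reduces to two elementary structural facts about the construction, namely that $\bm{n}_{ij}$ never vanishes and that $\frac{\partial h_i}{\partial \lambda_i} = \frac{\partial h_j}{\partial \lambda_j} = 1$. I would first record a standing observation: since the dynamics \eqref{eqn:hyperplane dynamics} keep $\Vert \bm{n}_{ij} \Vert_2$ constant and $\bm{n}_{ij}$ is initialized as a unit vector, it remains a unit vector along every trajectory, so $\bm{n}_{ij} \neq \bm{0}$. Because $\bm{R}_i \in SO(d)$ and $\bm{Q}_i$ is invertible, $(\bm{R}_i \bm{Q}_i)^T$ is invertible, hence $\bm{\mu}_i = (\bm{R}_i \bm{Q}_i)^T \bm{n}_{ij} \neq \bm{0}$, and likewise $\bm{\mu}_j \neq \bm{0}$; moreover $\bm{R}_i^T \bm{n}_{ij}$ and $\bm{R}_j^T \bm{n}_{ij}$ are unit vectors.

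For item 1, I would use that $h_i = \lambda_i - \Vert \bm{\mu}_i \Vert_{q_i}$ gives $\frac{\partial h_i}{\partial \lambda_i} \equiv 1$, while for $q_i > 1$ the map $\bm{\mu} \mapsto \Vert \bm{\mu} \Vert_{q_i}$ is continuously differentiable on $\mathbb{R}^d \setminus \{\bm{0}\}$, so $\frac{\partial h_i}{\partial \bm{\mu}_i}$ is continuous wherever $\bm{\mu}_i \neq \bm{0}$, which by the standing observation is everywhere on the state space. Each coefficient vector in \eqref{eqn:time derivatives of CBFs:BCD}--\eqref{eqn:time derivatives of CBFs:A:3} is a finite sum of products of continuous state-dependent quantities ($\bm{R}_i$, $\bm{\rho}_i$, $\bm{n}_{ij}$, and the $\widehat{\cdot}$ map), the constant matrices $\bm{Q}_i$, and these partial derivatives, so all coefficient vectors are continuous; the argument for index $j$ is identical.

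For item 2, I would simply note that $\bm{d}_i = -\frac{\partial h_i}{\partial \lambda_i} = -1$ and $\bm{d}_j = \frac{\partial h_j}{\partial \lambda_j} = 1$, so the tuple $(\bm{a}_i, \bm{b}_i, \bm{c}_i, \bm{d}_i)$ has a nonzero entry and cannot be the zero vector, and similarly for $j$ (equivalently, $\frac{\partial h_i}{\partial \gamma_{ij}} = -1 \neq 0$, which is precisely condition 3 of Lemma \ref{lem:CBF}). For item 3, I would exploit the decoupled structure of \eqref{eqn:time derivatives of CBFs}: the control $\bm{v}_i$ appears only in $\dot{h}_i$, the control $\bm{v}_j$ appears only in $\dot{h}_j$, and the shared controls $\bm{\eta}_{ij}, \delta_{ij}$ can be set to zero. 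Taking also $\bm{\omega}_i = \bm{\omega}_j = \bm{0}$ leaves $\dot{h}_i = \bm{b}_i^T \bm{v}_i$ and $\dot{h}_j = \bm{b}_j^T \bm{v}_j$ with $\bm{b}_i = \bm{R}_i^T \bm{n}_{ij}$ and $\bm{b}_j = -\bm{R}_j^T \bm{n}_{ij}$ both nonzero (in fact unit vectors), so choosing $\bm{v}_i = \max\{0, -\alpha(h_i)\}\,\bm{b}_i$ and $\bm{v}_j = \max\{0, -\alpha(h_j)\}\,\bm{b}_j$ yields $\dot{h}_i \geq -\alpha(h_i)$ and $\dot{h}_j \geq -\alpha(h_j)$ simultaneously; on $\mathcal{S}_{ij}$ itself the zero control already suffices, since $h_i, h_j \geq 0$ forces $-\alpha(h_i), -\alpha(h_j) \leq 0$.

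I do not expect a serious obstacle. The two points that need care are (i) justifying that $\bm{n}_{ij}$, and therefore $\bm{\mu}_i$ and $\bm{\mu}_j$, never vanish, so that the $q_i$-norm gradient is a bona fide continuous object rather than a subdifferential at a kink, and (ii) recognizing the decoupling in item 3, so that joint feasibility of the two CBF inequalities follows without any genuine two-constraint compatibility argument. A minor subtlety is the exact domain on which the assertions are made ($\mathcal{S}_{ij}$ versus an open neighborhood $\overline{\mathcal{S}}_{ij}$ as in Definition 1), but since $\bm{b}_i$ and $\bm{b}_j$ are nonzero throughout, the feasibility claim is robust to that choice.
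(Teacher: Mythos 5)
Your proposal is correct and follows essentially the same route as the paper's own proof: establishing $\bm{n}_{ij}\neq\bm{0}$ from the norm-preserving dynamics, deducing $\bm{\mu}_i,\bm{\mu}_j\neq\bm{0}$ from invertibility of $\bm{R}_i\bm{Q}_i$ so that the $q$-norm gradient is continuous, observing $\bm{d}_i=-\partial h_i/\partial\lambda_i=-1\neq 0$ for non-vanishing, and noting that the zero control is feasible on $\mathcal{S}_{ij}$ since $-\alpha(h_i),-\alpha(h_j)\leq 0$ there. The only additions beyond the paper's argument are cosmetic (the explicit nonzero-$\bm{v}_i$ construction for item 3, which the zero-control observation already subsumes on $\mathcal{S}_{ij}$).
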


\begin{proof}
  The proof is provided in Appendix \ref{app:proof:thm:validity of CBFs}.
\end{proof}

The above theorem also ensures that, for any states inside the safe set $\mathcal{S}_{ij}$, at least one control input can be found such that renders the set $\mathcal{S}_{ij}$ forward invariant.

Given the nominal controller $\bm{u}_{ij}^d = (\bm{\omega}_i^d, \bm{v}_i^d, \bm{\omega}_j^d, \bm{v}_j^d)$ for two controlled objects, the collision-free control input $\bm{u}_{ij}^*$ for the controlled objects and the control input $(\bm{\eta}_{ij}^*, \delta_{ij}^*)$ for the separating hyperplane can be obtained by solving the following QP:

\begin{subequations}
  \begin{align}
    \min_{\bm{u}_{ij}, \bm{\eta}_{ij}, \delta_{ij}} & \qquad \frac{1}{2} \Vert \bm{u}_{ij} - \bm{u}_{ij}^d \Vert_2^2 \label{eqn:collision avoidance CBF-QP:origin:objective} \\
    \mathrm{s.t.} \quad                             & \qquad \dot{h}_i \geq - \alpha (h_i), \label{eqn:collision avoidance CBF-QP:origin:constraint:i}                       \\
                                                    & \qquad \dot{h}_j \geq - \alpha (h_j), \label{eqn:collision avoidance CBF-QP:origin:constraint:j}
  \end{align}
  \label{eqn:collision avoidance CBF-QP:origin}
\end{subequations}
where $\dot{h}_i$ and $\dot{h}_j$ are given by \eqref{eqn:time derivatives of CBFs}.

It is worth noting that the control input $(\bm{\eta}_{ij}, \delta_{ij})$ for the separating hyperplane is not incorporated into the objective function of the optimization problem \eqref{eqn:collision avoidance CBF-QP:origin}. This is because the normal vector $\bm{n}_{ij}$ and offset $\gamma_{ij}$ are virtual states that should adapt passively according to collision avoidance demands. Since the objective function in the optimization problem \eqref{eqn:collision avoidance CBF-QP:origin} is positive semi-definite, there may be infinitely many optimal solutions for $(\bm{\eta}_{ij}, \delta_{ij})$. This is not problematic, as all such control inputs ensure collision-free behavior. Moreover, the Lipschitz continuity of the control input can be guaranteed by selecting the optimal solution with the minimum norm.

\begin{figure*}[t]
  \captionsetup[subfloat]{farskip=2pt,captionskip=1pt}
  \centering
  \subfloat[$t = 0$ s.]{
    \includegraphics[clip, trim={1.2cm 1.1cm 1.8cm 2.0cm}, width=0.235\linewidth]{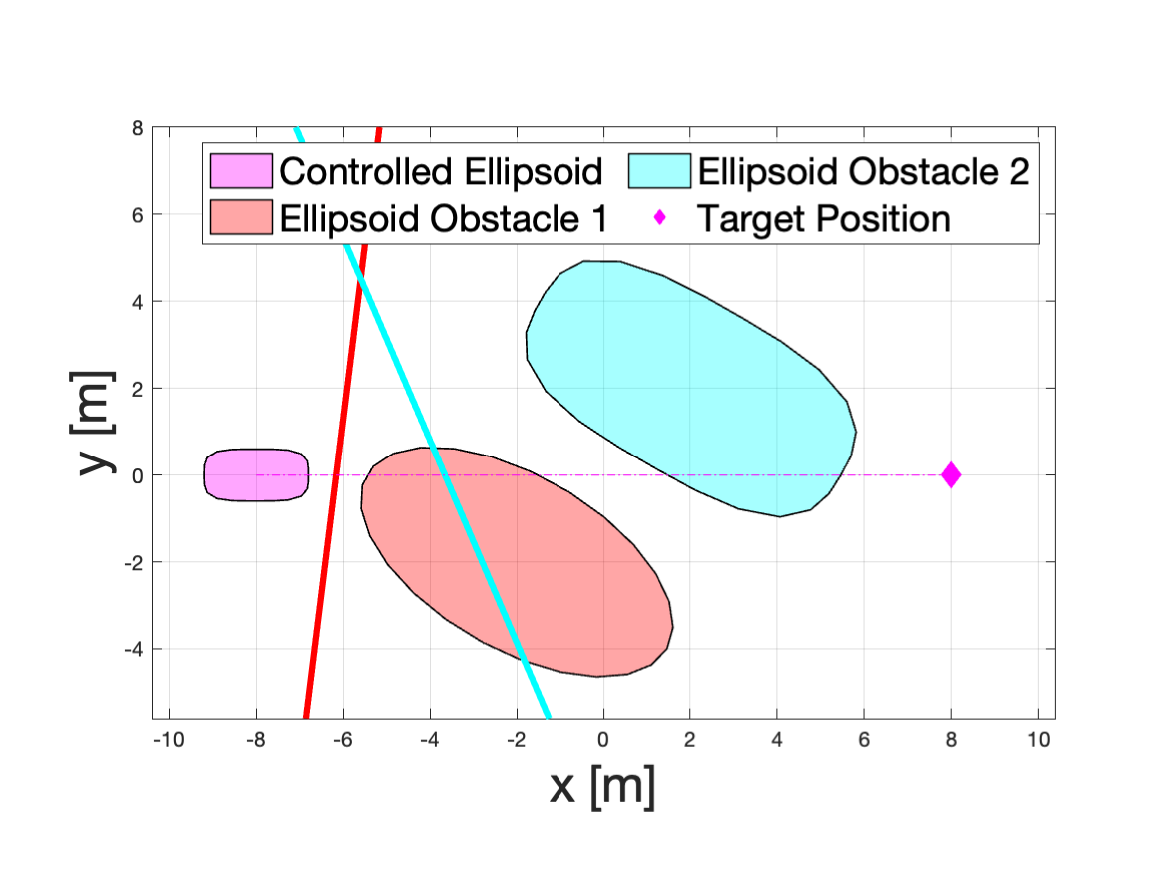}
    \label{fig:single vehicle simulation:1}} \hfill
  \subfloat[$t = 2$ s.]{
    \includegraphics[clip, trim={1.2cm 1.1cm 1.8cm 2.0cm}, width=0.235\linewidth]{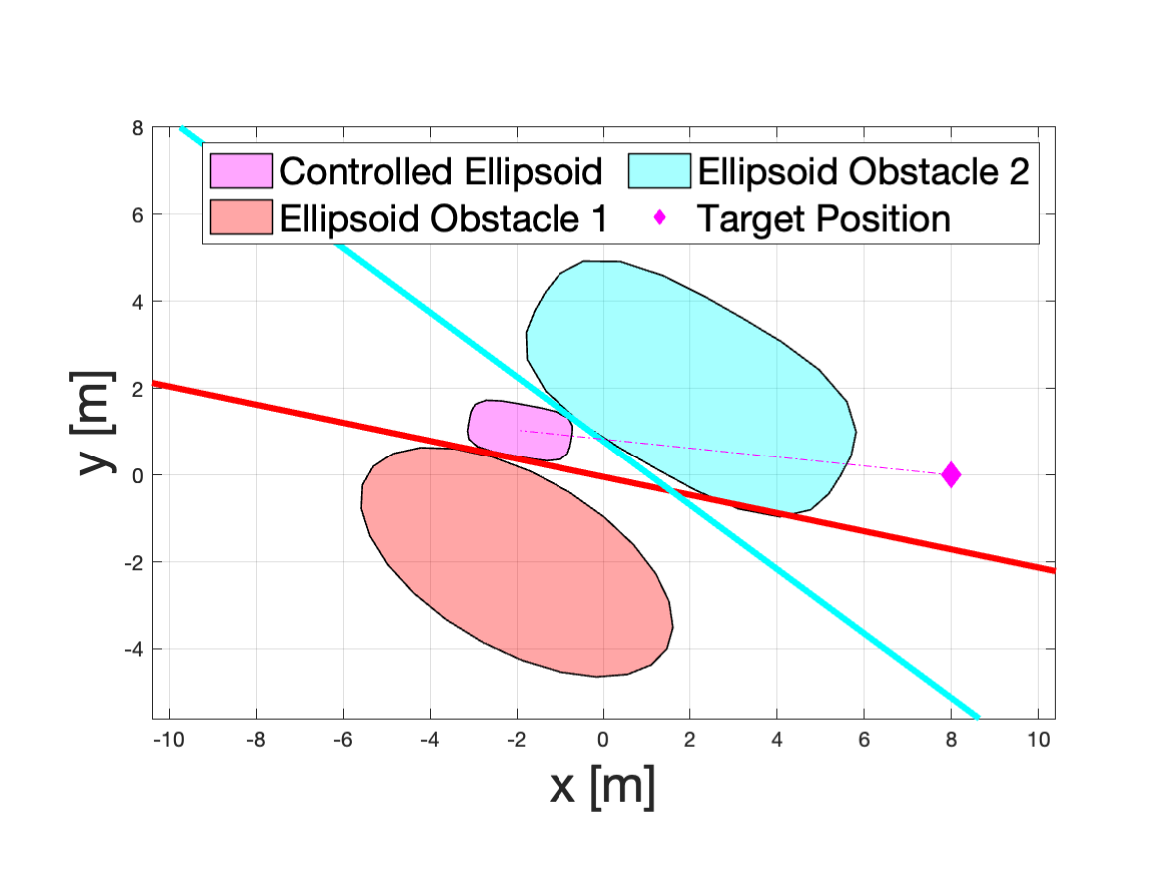}
    \label{fig:single vehicle simulation:2}}\hfill
  \subfloat[$t = 4$ s.]{
    \includegraphics[clip, trim={1.2cm 1.1cm 1.8cm 2.0cm}, width=0.235\linewidth]{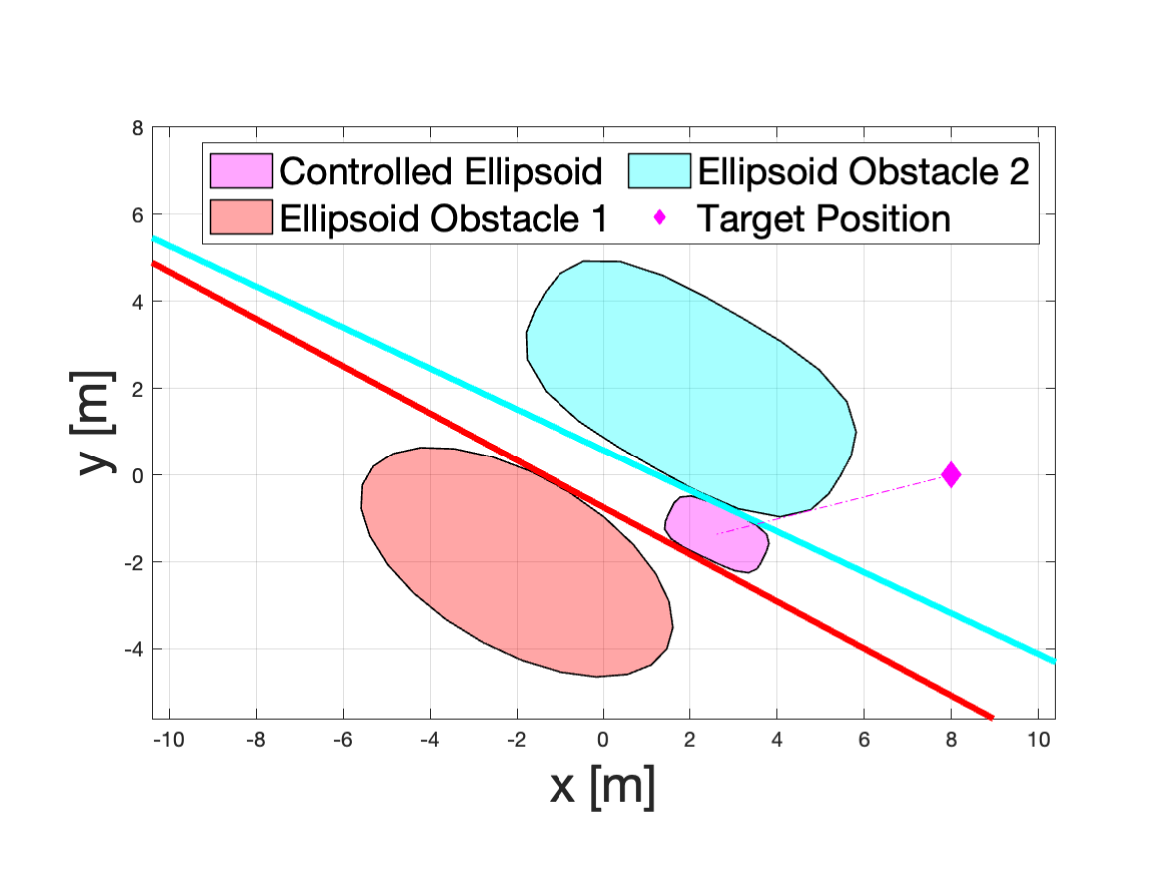}
    \label{fig:single vehicle simulation:3}} \hfill
  \subfloat[$t = 10$ s.]{
    \includegraphics[clip, trim={1.2cm 1.1cm 1.8cm 2.0cm}, width=0.235\linewidth]{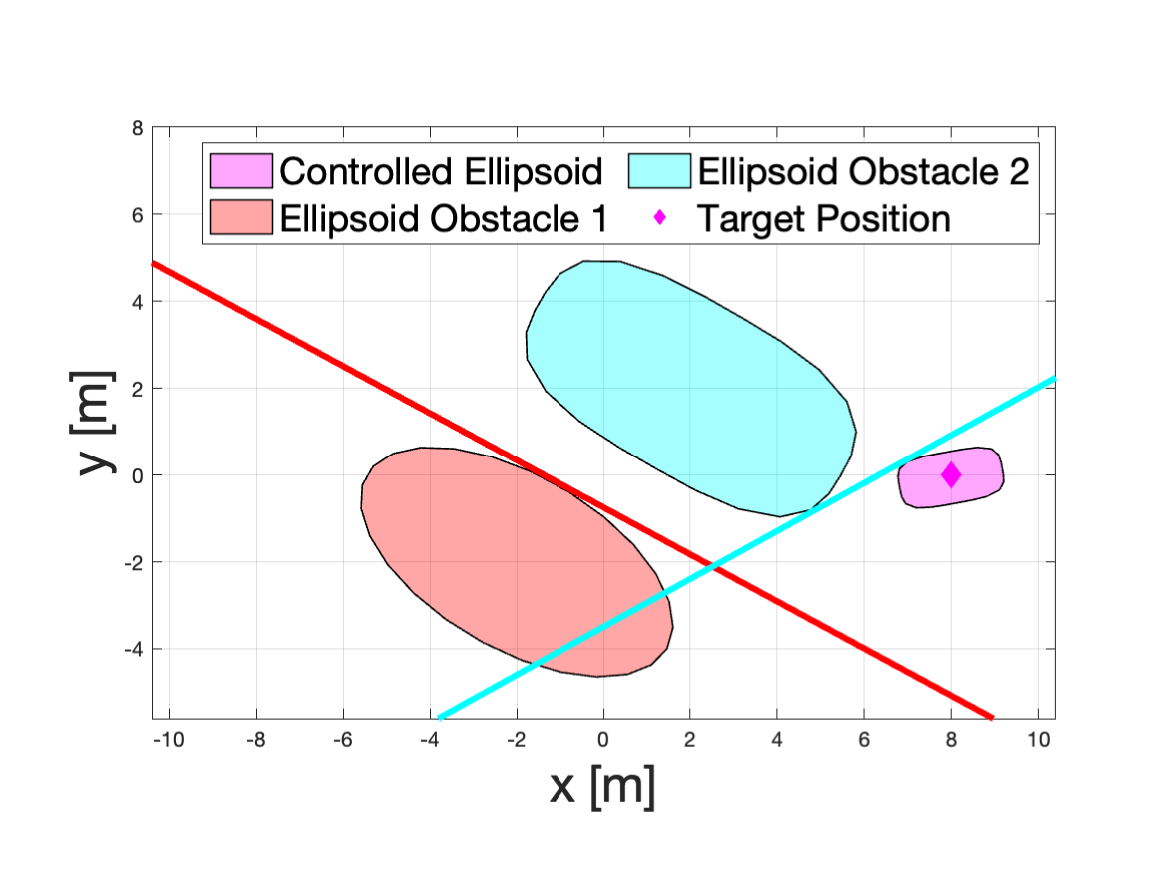}
    \label{fig:single vehicle simulation:4}}
  \caption{The snapshots of the simulation with the proposed collision avoidance method. The controlled ellipsoid navigates to its target position while avoiding collisions with the two ellipsoid obstacles.}
  \label{fig:single vehicle simulation}
  \vspace{-1em}
\end{figure*}

To ensure effective collision avoidance, it is essential to guarantee that the hyperplane $(\bm{n}_{ij}, \gamma_{ij})$ constitutes a valid separating hyperplane for the two objects at the initial time.
In the absence of such validity, the hyperplane may fail to properly separate the objects, thereby resulting in a violation of the collision avoidance constraints.
Furthermore, to prevent overly conservative evasion maneuvers, it is imperative to ensure that the initial hyperplane maintains sufficient separation from both objects.
These dual requirements can be satisfied by determining the maximum separating hyperplane for the two objects at the initial time.
For two disjoint general ellipsoids, the maximum separating hyperplane can be computed by solving the following optimization problem:
\begin{subequations}
  \begin{align}
    \min_{\widetilde{\bm{n}}_{ij}, \widetilde{\gamma}_{ij}} ~~ & \qquad  \qquad \qquad \widetilde{\bm{n}}_{ij}^T \widetilde{\bm{n}}_{ij} \label{eqn:collision avoidance CBF-QP:initial:objective}                                        \\
    \mathrm{s.t.} \quad                                        & \Vert (\bm{R}_i \bm{Q}_i)^T \widetilde{\bm{n}}_{ij} \Vert_{q_i} \leq \bm{\rho}_i^T \widetilde{\bm{n}}_{ij} - \widetilde{\gamma}_{ij} - 1, \label{eqn:collision avoidance CBF-QP:initial:constraint:i}   \\
                                                               & \Vert -(\bm{R}_j \bm{Q}_j)^T \widetilde{\bm{n}}_{ij} \Vert_{q_j} \leq -\bm{\rho}_j^T \widetilde{\bm{n}}_{ij} + \widetilde{\gamma}_{ij}, \label{eqn:collision avoidance CBF-QP:initial:constraint:j}
  \end{align}
  \label{eqn:collision avoidance CBF-QP:initial}
\end{subequations}
where the constraints \eqref{eqn:collision avoidance CBF-QP:initial:constraint:i} and \eqref{eqn:collision avoidance CBF-QP:initial:constraint:j} guarantee the following inequalities for any point $\bm{y}_i \in \mathcal{G}_i$ and $\bm{y}_j \in \mathcal{G}_j$:
\begin{equation}
  \widetilde{\bm{n}}_{ij}^T \bm{y}_i \geq \widetilde{\gamma}_{ij} + 1 > \widetilde{\gamma}_{ij} \geq \widetilde{\bm{n}}_{ij}^T \bm{y}_j.
\end{equation}
Consequently, the distance between the two objects is lower bounded by $1/ \Vert  \widetilde{\bm{n}}_{ij} \Vert_2$.
The objective function \eqref{eqn:collision avoidance CBF-QP:initial:objective} minimizes the Euclidean norm of the normal vector $\widetilde{\bm{n}}_{ij}$, which equivalently maximizes the separation distance between the hyperplane and both objects.
As a result, a valid separating hyperplane for collision avoidance is obtained as:
\begin{equation}
  \bm{n}_{ij} = \frac{\widetilde{\bm{n}}_{ij}}{\Vert  \widetilde{\bm{n}}_{ij} \Vert_2}, \quad \gamma_{ij} = \frac{ 2 \cdot \widetilde{\gamma}_{ij} + 1}{2  \cdot \Vert \widetilde{\bm{n}}_{ij} \Vert_2}.
\end{equation}
It is noteworthy that the optimization problem \eqref{eqn:collision avoidance CBF-QP:initial} is convex and can be solved efficiently using off-the-shelf solvers such as SCS \cite{odonoghue21operator}, CVX \cite{CVXtoolbox2008Michael}, and Mosek \cite{mosek}.

\section{Validation and Comparisons}

In this section, numerical simulations and real-world experiments are conducted to demonstrate the effectiveness of the proposed algorithm. The proposed algorithm is also compared with the following state-of-the-art collision avoidance algorithms based on CBFs.

\subsection{Simulation}

Numerical simulations are conducted to verify the efficacy of the proposed collision avoidance control method. The simulation scenario involves one fully controlled general ellipsoid $\mathcal{G}_0$, whose dynamics follow \eqref{eqn:object dynamics}. The initial position of $\mathcal{G}_0$ is $[-8; 0]$, and its initial rotation matrix is $[1, 0; 0, 1]$. The geometric parameters of $\mathcal{G}_0$ are set as $\bm{Q}_0 = [1.2, 0; 0, 0.6]$ and $p_0 = 4$. Additionally, two ellipsoid obstacles, $\mathcal{G}_1$ and $\mathcal{G}_2$, are centered at $\bm{\rho}_1 = [-2; -2]$ and $\bm{\rho}_2 = [2; 2]$, respectively. Their rotation matrices are $\bm{R}_1 = \bm{R}_2 = [\cos(\pi/6), \sin(\pi/6); -\sin(\pi/6), \cos(\pi/6)]$, with parameter matrices $\bm{Q}_1 = \bm{Q}_2 = [4, 0; 0, 2]$ and orders $p_1 = 2$ and $p_2 = 3$. The control objective is to drive $\mathcal{G}_0$ to the target position $\bm{\rho}_0^d = [8; 0]$ while avoiding collisions with the obstacles. To achieve this, the nominal controllers for $\mathcal{G}_0$ are designed as $\bm{\omega}_0^d = \bm{0}$ and $\bm{v}_0^d = -k_{\rho} \bm{R}_0^T (\bm{\rho}_0 - \bm{\rho}_0^d)$, where $k_{\rho} = 0.3$. The class $\mathcal{K}_{\infty}$ function is designed as $\alpha(h) = 20 h$.

Snapshots of the simulation are shown in Fig. \ref{fig:single vehicle simulation}. The two lines represent the separating hyperplanes between $\mathcal{G}_0$ and the two static ellipsoids, with each hyperplane colored to match its corresponding obstacle. The hyperplanes adapt passively to the motion of $\mathcal{G}_0$. As demonstrated in the snapshots, the hyperplanes consistently separate $\mathcal{G}_0$ from the obstacles, ensuring collision-free navigation.

\begin{figure}[h]
  \vspace{-0.7em}
  \centering
  \includegraphics[width=0.55\hsize]{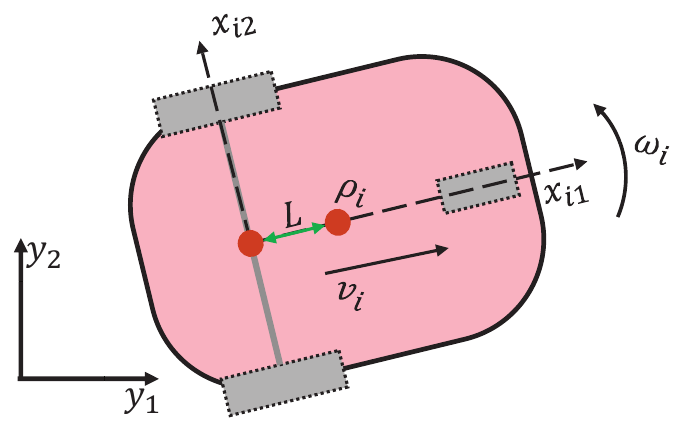}
  \caption{The nonholonomic vehicle model. The control inputs are the angular velocity $\omega_i$ and the forward translational velocity of the rear axle $v_i$ in the body frame of the vehicle.}
  \label{fig:nonholonomic}
  \vspace{-0.7em}
\end{figure}

\subsection{Experiment}

\begin{figure*}[!h]
  \centering
  \subfloat[$t=0$s.]{
    \includegraphics[clip, trim={5.5cm 1.5cm 6.5cm 2.0cm}, width=0.231\linewidth]{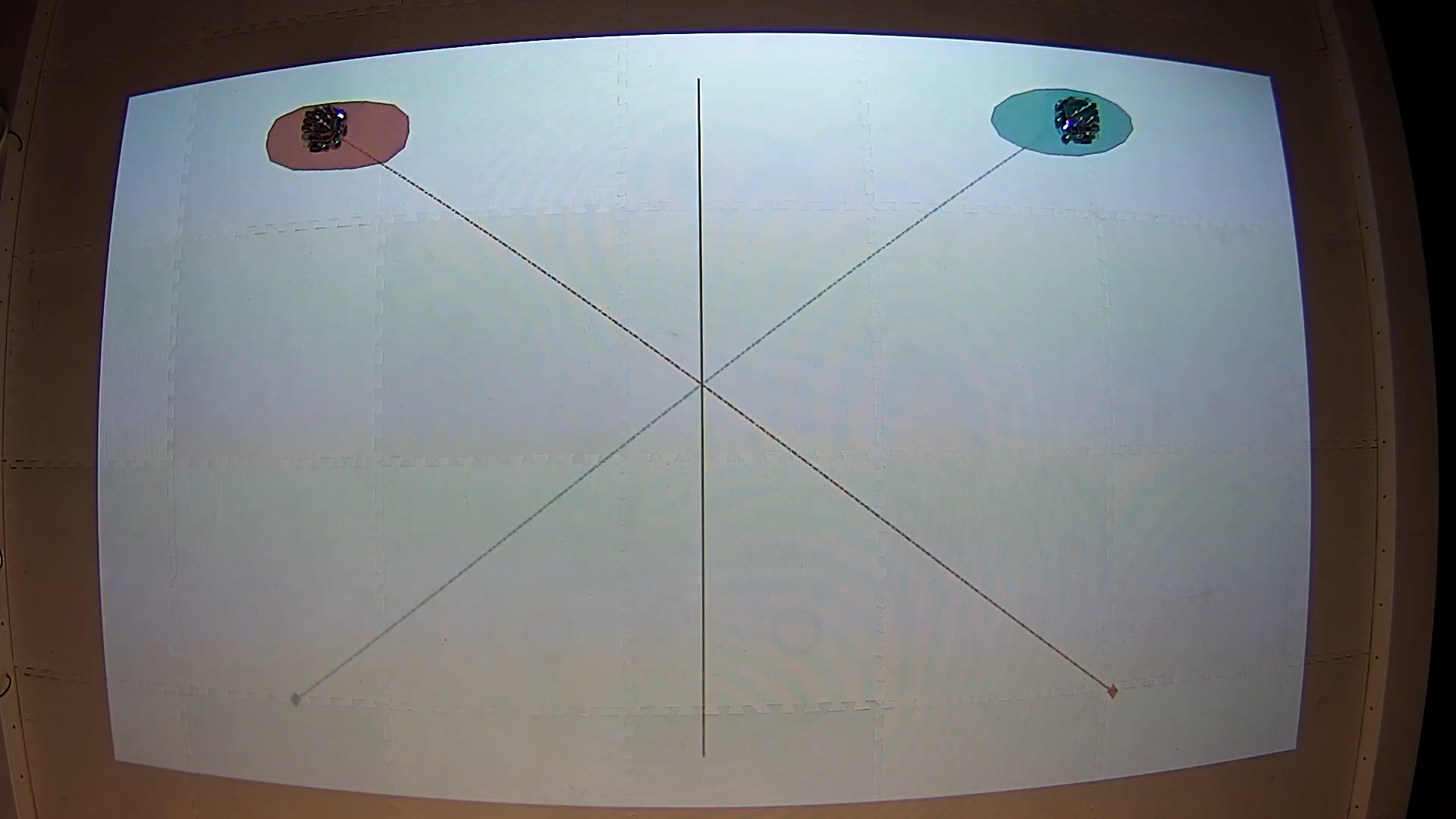}
    \label{fig:experiment:1}}
  \subfloat[$t=10$s.]{
    \includegraphics[clip, trim={5.5cm 1.5cm 6.5cm 2.0cm}, width=0.231\linewidth]{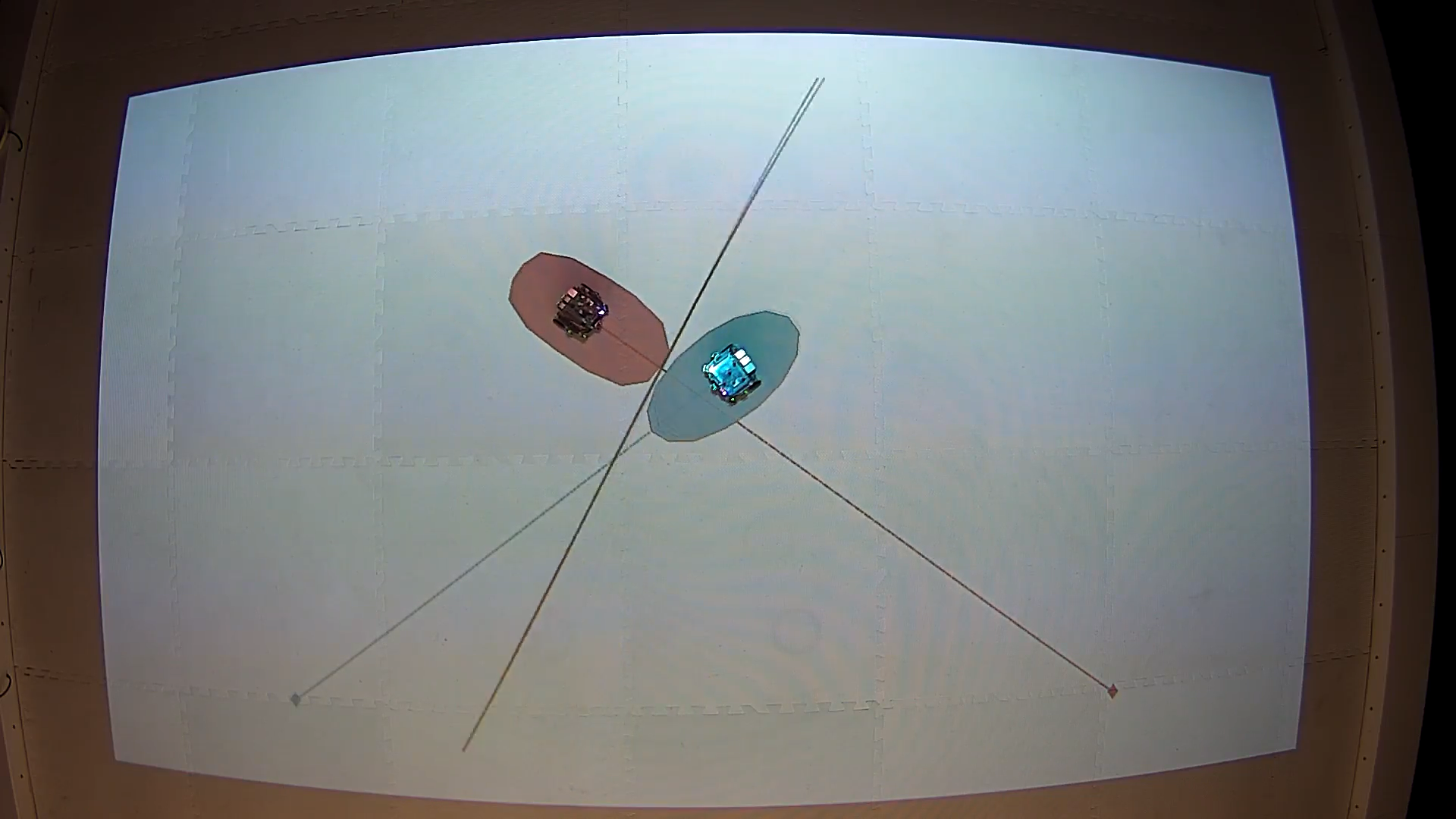}
    \label{fig:experiment:2}}
  \subfloat[$t=15$s.]{
    \includegraphics[clip, trim={5.5cm 1.5cm 6.5cm 2.0cm}, width=0.231\linewidth]{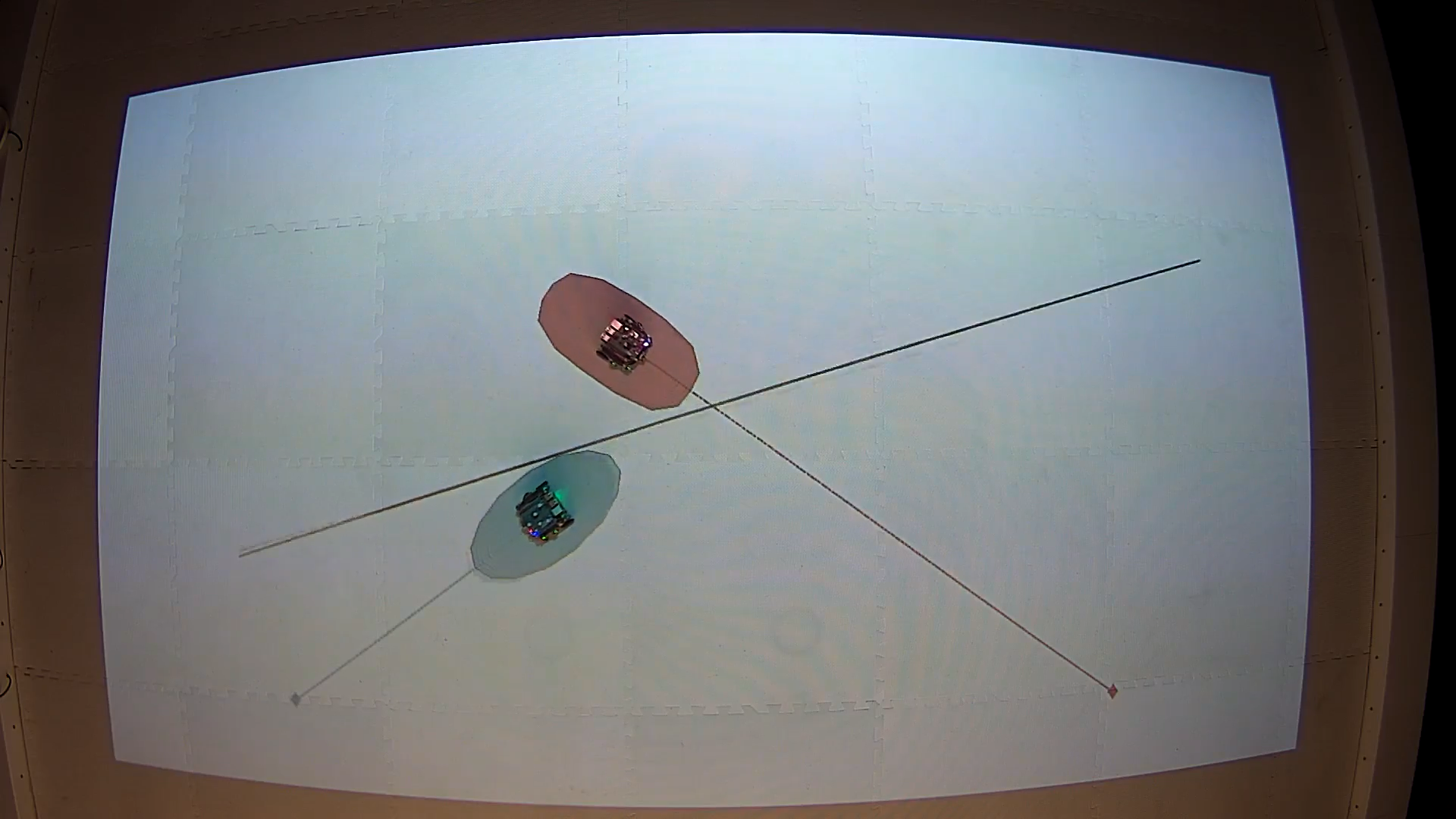}
    \label{fig:experiment:3}}
  \subfloat[$t=24$s.]{
    \includegraphics[clip, trim={5.5cm 1.5cm 6.5cm 2.0cm}, width=0.231\linewidth]{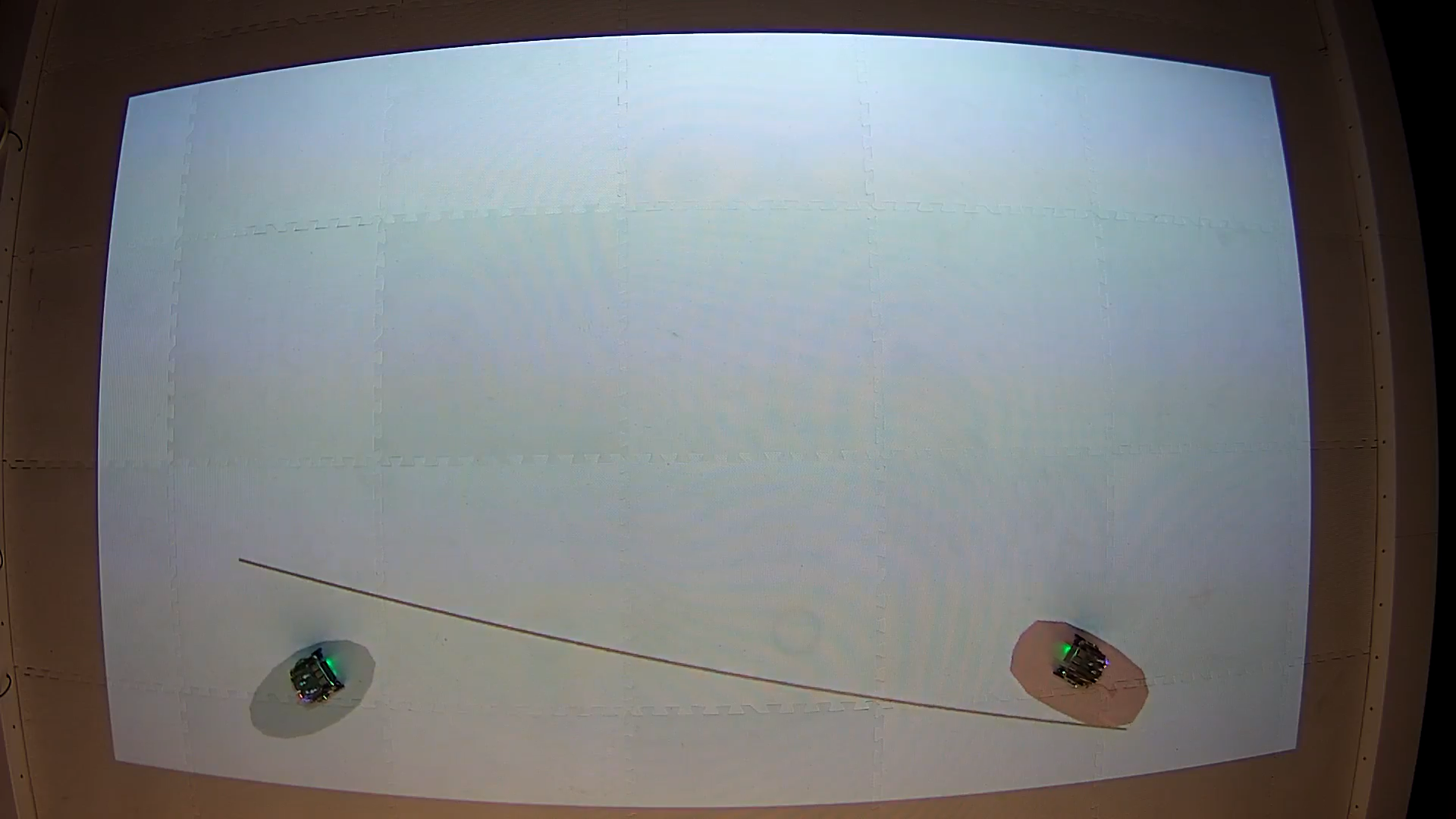}
    \label{fig:experiment:4}}
  \caption{Snapshots of the experiment with two nonholonomic vehicles. During the navigation, a separating hyperplane, colored in black, always separates the two vehicles. The vehicles navigate to their target positions without collision.}
  \label{fig:experiment}
\end{figure*}

Experiments based on the Robotarium platform \cite{pickem2017robotarium} are conducted to demonstrate that our control method can be applied to vehicles with nonholonomic dynamics. For the case $d=2$, the nonholonomic dynamics are described by:

\begin{subequations}
  \label{eqn:nonholonomic dynamics}
  \begin{gather}
    \label{eqn:nonholonomic dynamics:rotational}
    \dot{\bm{R}}_i = \bm{R}_i \begin{bmatrix} 0 & -1 \\ 1 & 0 \end{bmatrix} \omega_i, \\
    \label{eqn:nonholonomic dynamics:translational}
    \dot{\bm{\rho}}_i = \bm{R}_i \cdot \begin{bmatrix} 1 & 0 \\ 0 & L \end{bmatrix} \cdot \begin{bmatrix} v_i \\ \omega_i \end{bmatrix},
  \end{gather}
\end{subequations}
where $\omega_i \in \mathbb{R}$ is the angular velocity in the body frame of vehicle $i$, $v_i \in \mathbb{R}$ is the forward translational velocity of the rear axle in its body frame, and $L \in \mathbb{R}$ is the distance from the center of the rear axle to the geometric center $\bm{\rho}_i$, as illustrated in Fig. \ref{fig:nonholonomic}. The time derivatives of the CBFs \eqref{eqn:collision-free CBF} under the dynamics \eqref{eqn:nonholonomic dynamics} can be derived using the procedures outlined in Appendix \ref{app:proof:lem:time derivatives of CBFs}.

The control objective of the experiment is to drive the geometric center $\bm{\rho}_i$ of vehicle $i$ to its target position $\bm{\rho}_i^d$. This is achieved using the following nominal controller:

\begin{equation}
  \begin{bmatrix} v_i^d \\ \omega_i^d \end{bmatrix} = - k_{\rho} \cdot \begin{bmatrix} 1 & 0 \\ 0 & 1/L \end{bmatrix} \cdot \bm{R}_i^T \cdot (\bm{\rho}_i - \bm{\rho}_i^d), \quad k_{\rho} > 0.
\end{equation}
Fig. \ref{fig:experiment:1} shows the initial configurations of the two differential-driven vehicles, each stabilized at its target position, marked with the corresponding color.
The geometric parameters of the two vehicles are set as $\bm{Q}_1 = [0.4, 0; 0, 0.2]$, $p_1 = 2$, $\bm{Q}_2 = [0.4, 0; 0, 0.2]$, and $p_2 = 3$.
Fig. \ref{fig:evolution of CBFs} depicts the evolution of the CBFs during the experiment. Since the values of the proposed CBFs remain greater than 0 throughout the experiment, no collisions occur between the two vehicles.

\begin{figure}[h!]
  \vspace{-0.7em}
  \centering
  \includegraphics[clip, trim={1.3cm 3.2cm 1.0cm 4.5cm}, width=0.8\hsize]{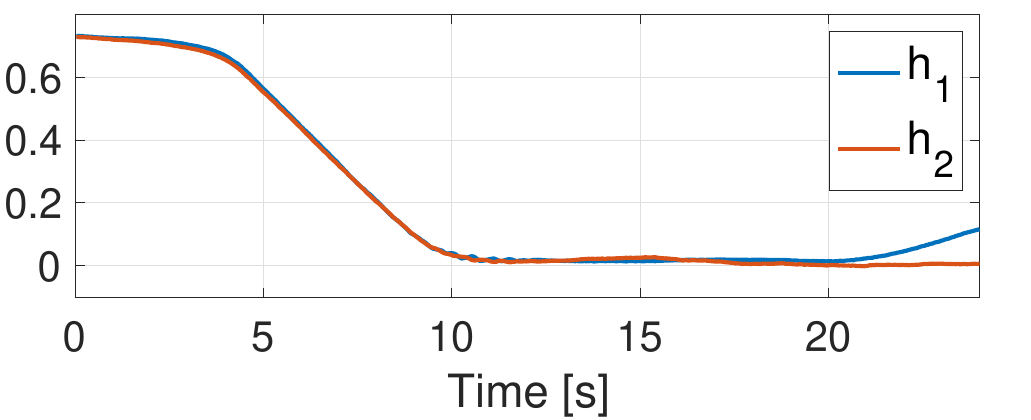}
  \caption{The evolution of CBFs in the experiment.}
  \label{fig:evolution of CBFs}
  \vspace{-0.7em}
\end{figure}

\subsection{Benchmark Comparisons}
To demonstrate the advantages in computational efficiency, the proposed collision avoidance control method is compared with the following state-of-the-art CBF-based collision avoidance methods:

\begin{enumerate}
  \item The duality-based safety-critical control (DB-CBF) \cite{thirugnanam2022duality} approximates the time derivative of CBFs (the Euclidean distance between two objects) using the Lagrange dual problem. This method requires solving a primal problem to obtain the optimal dual variables, in addition to solving the CBF-QP problem, resulting in a double-level optimization process. It is worth noting that the original method is designed for polyhedra and has been customized for general ellipsoids according to the paradigm in \cite{zhang2020optimization}.

  \item The differentiable optimization-based CBFs (DO-CBF) \cite{dai2023safe} derive the time derivative of CBFs (the growth distance \cite{ong1996growth} between two objects) from the KKT conditions. Since the KKT conditions require solving a primal problem first, this method also involves a double-level optimization process. Unlike our method and the method in \cite{thirugnanam2022duality}, this approach does not introduce additional virtual states into the CBF-QP, resulting in a lower-dimensional QP to solve.
\end{enumerate}

All methods are evaluated on a MacBook Pro laptop with an M4 Pro chip and 18 GB of RAM. The optimization problems, including the CBF-QP \eqref{eqn:collision avoidance CBF-QP:origin} and the primal problems of DB-CBF and DO-CBF, are solved using the MOSEK software (version 10.2) with MATLAB interfaces \cite{mosek}.

In each simulation, the scenario is designed as the random position stabilization of $10$ ellipsoids under the dynamics \eqref{eqn:object dynamics} with $d=2$. To achieve collision avoidance between each pair of agents, the DB-CBF method must solve $45$ primal problems, which are conic optimization problems with a dimension of $4$. Similarly, the DO-CBF method must solve $45$ conic optimization problems with a dimension of $3$. All these primal problems are solved sequentially.

In addition to the primal problem, all these methods must solve a CBF-QP to obtain safe control inputs. The DB-CBF method introduces $45 \!\times\! 4$ virtual control inputs in addition to $30$ physical control inputs, resulting in a CBF-QP with a dimension of $210$. Moreover, $45$ extra constraints are introduced into the CBF-QP in addition to the $45$ CBF constraints. The DO-CBF method does not introduce any additional virtual control inputs, and only $45$ CBF constraints are needed. The proposed method introduces $3$ virtual control inputs for each pair of agents and requires $2$ CBF constraints for each pair, resulting in a CBF-QP with a dimension of $165$ and $90$ constraints.

\begin{table}[htbp]
  \tabcolsep=0.1cm % Reduced column spacing
  \centering
  \caption{Benchmark Comparison}
  \label{tbl:benchmark}
  \begin{tabular}{*{7}{c}}
    \toprule
    \multirow{3}{*}{Method}              & \multicolumn{2}{c}{Primal Problem} & \multicolumn{3}{c}{CBF-QP} & \multirow{3}{*}{Total Time}                                                    \\
    \cmidrule(lr){2-3} \cmidrule(lr){4-6}
                                         & Dim.                               & Avg. Time                  & Dim.                        & Constr.     & Avg. Time                          \\ \midrule
    DB-CBF \cite{thirugnanam2022duality} & $45 \!\times\! 4$                  & 13.8~ms                    & $210$                       & 90          & 14.7 ms         & 32.2~ms          \\
    DO-CBF \cite{dai2023safe}            & $\mathbf{45 \!\times\! 3}$         & \textbf{9.2~ms}            & \textbf{30}                 & \textbf{45} & \textbf{3.3 ms} & 14.1~ms          \\
    Proposed                             & ---                                & ---                        & $165$                       & 90          & 10.9 ms         & \textbf{11.8 ms} \\
    \bottomrule
  \end{tabular}
  \begin{tablenotes}
    \item The bold entities present the best performance results for each column.
  \end{tablenotes}
  \vspace{-0.5em}
\end{table}

Table \ref{tbl:benchmark} reports the dimensions of the optimization problems, the number of constraints, and the average computational times for the different methods. As expected, the total computational time of the DB-CBF method is longer than that of the DO-CBF method and our proposed method, as it involves two high-dimensional optimization problems.
Although the DO-CBF method has a lower-dimensional CBF-QP, it still requires solving an additional primal problem, and derives the time derivative of CBFs from the KKT conditions, which complicates the optimization process, resulting in a longer average time for the CBF-QP compared to the proposed method in this scenarios.

\section{Conclusions and Future Works}
This paper has proposed a collision avoidance control method for general ellipsoids based on the separating hyperplane theorem.
Collision-free CBFs are analytically constructed using collision-free conditions derived from the dual cone.
Thanks to the analytical form of the CBFs, the proposed collision avoidance method does not require solving additional optimization problems beyond the CBF-QP, which expedites the process for achieving safe control.
Simulations and experiments have been conducted with various system dynamics to verify the effectiveness and extendability of the proposed method.

Future work includes extending the method from general ellipsoids to general convex primitives, implementing the proposed control method in a distributed manner, and actively driving the hyperplane to avoid potential deadlocks between objects.

\begin{appendix}
  \subsection{Proof of Lemma \ref{lem:time derivatives of CBFs}}
  \label{app:proof:lem:time derivatives of CBFs}
  The time derivatives of functions $h_i$ and $h_j$ are given by
  \begin{equation}
    \label{eqn:time derivatives of functions}
    \begin{aligned}
      \dot{h}_i & = \frac{\partial h_i}{\partial \lambda_i}^T \dot{\lambda}_i + \frac{\partial h_i}{\partial \bm{\mu}_i}^T \dot{\bm{\mu}}_i, \\
      \dot{h}_j & = \frac{\partial h_j}{\partial \lambda_j}^T \dot{\lambda}_j + \frac{\partial h_j}{\partial \bm{\mu}_j}^T \dot{\bm{\mu}}_j.
    \end{aligned}
  \end{equation}
  Taking the time derivative on both sides of \eqref{eqn:intermediate variables}, the time derivatives of the intermediate variable $\lambda_i$ are obtained as
  \begin{equation*}
    \begin{aligned}
      \dot{\lambda}_i & = \bm{n}_{ij}^T \dot{\bm{\rho}}_i + \bm{\rho}_i^T \dot{\bm{n}}_{ij} + \dot{\gamma}_{ij}                               \\
                      & = \bm{n}_{ij}^T \bm{R}_i \bm{v}_i + \bm{\rho}_i^T (\bm{I}_d - \bm{n}_{ij}\bm{n}_{ij}^T) \bm{\eta}_{ij} - \delta_{ij}.
    \end{aligned}
  \end{equation*}
  Similarly, the time derivative of intermediate variable $\lambda_j$ is
  \begin{equation*}
    \dot{\lambda}_j = -\bm{n}_{ij}^T \bm{R}_i \bm{v}_j - \bm{\rho}_j^T (\bm{I}_d - \bm{n}_{ij}\bm{n}_{ij}^T) \bm{\eta}_{ij} + \delta_{ij}.
  \end{equation*}
  Regarding the time derivative of intermediate variable $\bm{\mu}_i$,
  \begin{equation*}
    \begin{aligned}
      \dot{\bm{\mu}}_{i} & = \bm{Q}_i^T \dot{\bm{R}}_i^T \bm{n}_{ij} + \bm{Q}_i^T \bm{R}_i^T \dot{\bm{n}}_{ij}                                                         \\
                         & = \bm{Q}_i^T \widehat{\bm{\omega}}_i^T \bm{R}_i^T \bm{n}_{ij} + \bm{Q}_i^T \bm{R}_i^T (\bm{I}_d - \bm{n}_{ij}\bm{n}_{ij}^T) \bm{\eta}_{ij}.
    \end{aligned}
  \end{equation*}
  For the case where $d=2$, according to the definition of the operation $\wedge$, we have $\widehat{\bm{\omega}}_i^T \bm{R}_i^T \bm{n}_{ij} = \widehat{1}^T \bm{R}_i^T \bm{n}_{ij} \omega_i$, that is,
  \begin{equation*}
    \dot{\bm{\mu}}_{i} = (\bm{R}_i \widehat{1} \bm{Q}_i)^T \bm{n}_{ij} \omega_i + (\bm{R}_i \bm{Q}_i)^T (\bm{I}_d - \bm{n}_{ij}\bm{n}_{ij}^T) \bm{\eta}_{ij}.
  \end{equation*}
  For the case where $d=3$, according to the definition of the operation $\wedge$, we have $\widehat{\bm{\omega}}_i^T \bm{R}_i^T \bm{n}_{ij} = \widehat{(\bm{R}_i^T \bm{n}_{ij})} \bm{\omega}_i$, that is,
  \begin{equation*}
    \dot{\bm{\mu}}_{i} = \bm{Q}_i^T \widehat{(\bm{R}_i^T \bm{n}_{ij})} \bm{\omega}_i + (\bm{R}_i \bm{Q}_i)^T (\bm{I}_d - \bm{n}_{ij}\bm{n}_{ij}^T) \bm{\eta}_{ij}.
  \end{equation*}
  Similar results for the intermediate variable $\bm{\mu}_j$ can be obtained. For the case where $d=2$,
  \begin{equation*}
    \dot{\bm{\mu}}_{j} = -(\bm{R}_j \widehat{1} \bm{Q}_j)^T \bm{n}_{ij} \omega_j - (\bm{R}_j \bm{Q}_j)^T (\bm{I}_d - \bm{n}_{ij}\bm{n}_{ij}^T) \bm{\eta}_{ij}.
  \end{equation*}
  For the case where $d=3$,
  \begin{equation*}
    \dot{\bm{\mu}}_{j} = -\bm{Q}_j^T \widehat{(\bm{R}_j^T \bm{n}_{ij})} \bm{\omega}_j - (\bm{R}_j \bm{Q}_j)^T (\bm{I}_d - \bm{n}_{ij}\bm{n}_{ij}^T) \bm{\eta}_{ij}.
  \end{equation*}
  Substituting the time derivatives of the intermediate variables into \eqref{eqn:time derivatives of functions}, the coefficient matrices can be obtained.
  The proof is complete.

  \subsection{Proof of Theorem \ref{thm:validity of CBFs}}
  \label{app:proof:thm:validity of CBFs}
  Due to the similarity in proving the conclusions for $h_i$ and $h_j$, we only prove the results for $h_i$. The same results can be obtained straightforwardly for $h_j$.

  According to Lemma \ref{lem:time derivatives of CBFs}, the coefficient vector $(\bm{a}_i, \bm{b}_i, \bm{c}_i, \bm{d}_i)$ is a continuous function with respect to the state of the system if and only if ${\partial h_i}/{\partial \lambda_i}$ and ${\partial h_i}/{\partial \bm{\mu}_i}$ are continuous functions with respect to $(\lambda_i, \bm{\mu}_i)$.
  It can be verified that ${\partial h_i}/{\partial \lambda_i} = 1$ is always continuous, while ${\partial h_i}/{\partial \bm{\mu}_i}$ is continuous unless $\bm{\mu}_i = \bm{0}$.
  By the definition of $\bm{\mu}_i$, the following equality holds:
  \begin{equation}
    \bm{\mu}_i^T \bm{\mu}_i = \bm{n}_{ij}^T (\bm{R}_i \bm{Q}_i) (\bm{R}_i \bm{Q}_i)^T \bm{n}_{ij}.
  \end{equation}
  The matrix $(\bm{R}_i \bm{Q}_i) (\bm{R}_i \bm{Q}_i)^T$ is positive definite since $\bm{R}_i$ and $\bm{Q}_i$ are both invertible matrices.
  Furthermore, the vector $\bm{n}_{ij} \neq \bm{0}$ because the dynamics \eqref{eqn:hyperplane dynamics} preserve its norm.
  Consequently, $\bm{\mu}_i^T \bm{\mu}_i > 0$, and thus $\bm{\mu}_i \neq \bm{0}$.
  This proves that ${\partial h_i}/{\partial \bm{\mu}_i}$ is always continuous under the system dynamics \eqref{eqn:hyperplane dynamics}, which establishes the first point.

  By Lemma \ref{lem:time derivatives of CBFs}, we have $\bm{d}_i = ({\partial h_i}/{\partial \lambda_i})^T = 1$.
  Consequently, the coefficient vectors $(\bm{a}_i, \bm{b}_i, \bm{c}_i, \bm{d}_i)$ cannot become zero, which proves the second point.

  Note that $0 \geq -\alpha(h_i)$ and $0 \geq -\alpha(h_j)$ on $\mathcal{S}_{ij}$.
  As a result, the control input $(\bm{\omega}_i, \bm{v}_i, \bm{\omega}_j, \bm{v}_j, \bm{\eta}_{ij}, \delta_{ij}) = \bm{0}$ is always feasible according to \eqref{eqn:time derivatives of CBFs}.
  The proof is complete.

\end{appendix}

% \balance
\footnotesize{
  \bibliographystyle{IEEEtranN}
  \bibliography{IEEEabrv, papers.bib}
}

\addtolength{\textheight}{-12cm}
\end{document}